\documentclass[aoas]{imsart}

\RequirePackage{amsthm,amsmath,amsfonts,amssymb}
\RequirePackage[authoryear]{natbib}
\RequirePackage[colorlinks,citecolor=blue,urlcolor=blue]{hyperref}
\RequirePackage{graphicx}
\RequirePackage{natbib}
\usepackage{amsmath}
\usepackage{algpseudocode}
\usepackage{algpseudocode}
\usepackage[linesnumbered,ruled,vlined]{algorithm2e}
\usepackage{natbib}
\usepackage{graphicx}
\usepackage{subfig}
\usepackage{amssymb}
\usepackage[english]{babel}
\usepackage[dvipsnames]{xcolor}
\usepackage{comment}
\usepackage{framed} 
\usepackage{comment}
\usepackage{multirow}
\usepackage{amsthm}
\usepackage{array}
\usepackage{tikz}
\usepackage{bbm}
\usepackage{enumitem}

\numberwithin{equation}{section}

\newcolumntype{P}[1]{>{\centering\arraybackslash}p{#1}}
\newcolumntype{M}[1]{>{\centering\arraybackslash}m{#1}}

\newtheorem{assumption}{Assumption}

\newcommand{\f}{\frac}
\newcommand{\nn}{\nonumber}

\SetKwInput{KwInput}{Input}               
\SetKwInput{KwOutput}{Output}              
\SetKwRepeat{Do}{Repeat}{Until}

\startlocaldefs
\theoremstyle{plain}

\newtheorem{theorem}{Theorem}[section]
\newtheorem{corollary}{Corollary}[section]
\newtheorem{lemma}[theorem]{Lemma}

\newtheorem{proposition}[theorem]{Proposition}

\theoremstyle{definition}

\newtheorem{condition}[theorem]{Condition}


\endlocaldefs

\begin{document}

\begin{frontmatter}
\title{Learning covariate importance for matching in policy-relevant observational research}
\runtitle{Learning covariate importance for matching in policy-relevant observational research}

\begin{aug}

\author[A]{\fnms{Hongzhe} \snm{Zhang}\ead[label=e1]{hongzhe.zhang@pennmedicine.upenn.edu}},
\author[B]{\fnms{Jiasheng} \snm{Shi}\ead[label=e2,mark]{shijiasheng@cuhk.edu.cn}}
\and
\author[A]{\fnms{Jing} \snm{Huang}\ead[label=e3,mark]{jing14@pennmedicine.upenn.edu}}

\address[A]{Department of Biostatistics, Epidemiology and Informatics, Perelman School of Medicine at the University of Pennsylvania, Philadelphia, PA \printead{e1,e3}}
\address[B]{School of Data Science, The Chinese University of Hong Kong, Shenzhen (CUHK-Shenzhen), China \printead{e2}}
\end{aug}

\begin{abstract}
	Matching methods are widely used to reduce confounding effects in observational studies, but conventional approaches often treat all covariates as equally important, which can result in poor performance when covariates differ in their relevance to the study. We propose the Priority-Aware one-to-one Matching Algorithm (PAMA), a novel semi-supervised framework that learns a covariate importance measure from a subset data of units that are paired by experts and uses it to match additional units. It optimizes a weighted quadratic score that reflects the relevance between each covariate and the study, and iteratively updates the covariate importance measure in the score function using unlabeled data. PAMA is model-free, but we have established that the covariate importance measure---the learned weights---is consistent when the oracle matching rule aligns with the design. In addition, we introduce extensions that address imbalanced data, accommodate temporal covariates, and improve robustness to mispaired observations.
	
	
	In simulations, PAMA outperforms standard methods, particularly in high-dimensional settings and under model misspecification. Applied to a real-world study of in-person schooling and COVID-19 transmission, PAMA recovers nearly twice as many expert-designated matches as competing methods using baseline covariates. A self-taught learning extension improves performance in simulations, though its benefit is context-dependent.
	
	To our knowledge, PAMA is the first framework to apply semi-supervised learning to observational matching with covariates of unequal relevance. It offers a scalable and interpretable tool for incorporating expert insight into policy-relevant observational research.
\end{abstract}

\begin{keyword}
\kwd{Matching}
\kwd{Semisupervised}
\kwd{Model-free}
\kwd{Self-taught learning}
\kwd{Covariate importance measure}
\kwd{Temporal covariates}
\end{keyword}

\end{frontmatter}


\section{Introduction}
Matching methods play a vital role in public health policy research by enabling researchers to create comparable groups in observational studies, mimicking the conditions of randomized experiments. In settings where randomization is infeasible or unethical---such as evaluating the effects of school closures, vaccine mandates, or mask policies---matching helps control for confounding by balancing covariates between treated and control units. Accurate and effective matching improves the validity of causal inferences drawn from real-world data---an especially important consideration for informing timely and equitable public health decisions.

A wide range of matching methods have been developed, including propensity score matching, linear discriminant analysis, and more recent machine learning-based approaches such as distance metric learning. These methods differ in how they assess similarity between units---some rely on balancing summary scores, while others attempt to minimize distance across all covariates. While these approaches have proven useful in certain specific circumstances, they often assume all covariates are equally important in determining matching quality, rely on arbitrary prespecified weighting schemes that may not reflect domain knowledge or policy relevance, are not tailored for contemporary observational studies with a vast number of covariates, or fall short in interpretability. These shortcomings, however, can be particularly detrimental in the context of public health applications. For example, when evaluating school reopening policies during a pandemic, not all variables carry equal importance and experts may agree that community transmission rate, school type, and student density should take precedence over socioeconomic indicators or regional location indicators. Existing methods offer limited ability to encode such structured preferences in the matching process. As a result, important contextual priorities may be overlooked, and matching quality may not align with substantive policy considerations.


To address this gap, we propose a novel matching method that prioritizes critical covariates through a learned weighted quadratic score, while progressively accommodating less significant ones. By integrates expert-informed or data-driven variable ordering into the matching algorithm, this approach enables more interpretable, transparent, and policy-relevant comparisons. Furthermore, to account for the sparsely distributed nature of most real-world data applications in high-dimensional space, we focus on refining and adapting our proposed matching algorithm to effectively address one-to-one, also known as bigraph or bipartite graph, matching tasks. By bridging the divide between expert judgment and algorithmic matching, while carefully giving considerations to data characteristics in high-dimensional contexts, our method advances the methodological toolkit for evaluating complex public health interventions.




In addition to resolving the aforementioned inadequacies in existing algorithms, this work also delves into a distinct challenge: the imbalance between labeled and unlabeled observations. This issue, frequently encountered in public health research, arises from the prohibitive cost of expert manual labeling and compels us to further introduce a self-taught learning extension. By exploiting information from both labeled and unlabeled groups in the semi-supervised dataset, we embedded the self-taught learning into the matching algorithm, reducing overfitting and mitigating over-reliance on minority-labeled samples in imbalanced public health datasets.

The remainder of the paper is organized as follows. We introduce the motivating example and its data strcture in section 2 and review the related methods in section 3. The proposed matching algorithm PAMA, its properties, and extensions, are explained in detail in section 4. Simulations are presented in section 5 to evaluate PAMA’s matching performance, while its application to a real-world public health dataset are carried out in section 6. Miscellaneous discussions are left to section 7.

\section{Motivating Examples}
The COVID-19 pandemic created an unprecedented natural experiment in public health policymaking. Across states, cities, school districts, and healthcare systems, a wide range of non-pharmaceutical interventions---such as lockdowns, mask mandates, business restrictions, and school reopening policies---were implemented at different times and under varying local conditions. In the aftermath, researchers and policymakers continue to struggle with evaluating the effectiveness and equity of these interventions: were certain policies beneficial, for whom, and under what contextual conditions?

A core challenge in answering these questions is identifying appropriate comparators for policy adopters. Study units in such evaluations are often large jurisdictions---states, cities, or organizations---that vary across numerous dimensions, including population demographics, healthcare infrastructure, economic indicators, political preferences, comorbidity burdens, and climate patterns. Further adding to the complexity, in one of our motivating examples---a real-world public health study estimating the association between in-person and virtual education and community COVID-19 case incidence following school reopenings during the first year of the pandemic---some covariates within the semi-supervised dataset (e.g., local transmission rate) are time-varying, while others (e.g., population density or hospital capacity) may exhibit extreme heterogeneity or sparsity across regions.

Crucially, another challenge is, the relevance of each covariate varies depending on the specific policy under evaluation. For instance, when assessing mask mandates in schools, factors such as student age distribution, classroom density, and ventilation are likely more significant than median income or regional political leanings. Conversely, when evaluating statewide lockdown policies, economic resilience and employment structure may take precedence. Moreover, many of these covariates are continuous and sparse, making exact matching infeasible and forcing compromises in matching precision. In this context, attempting to match all covariates equally is often impractical and risks diluting the matching quality for the most substantively important variables.

These challenges underscore the need for matching methods that can accommodate variable prioritization---enabling high-relevance covariates to be matched first while flexibly incorporating less critical ones as data permit. A matching framework capable of learning or encoding such prioritization has the potential to yield more meaningful and policy-relevant comparisons, particularly in complex, high-dimensional observational settings such as pandemic response evaluation.

%
%
%

\section{Review of Related Methods}
The challenges outlined in the previous section highlight the need for flexible, data-driven matching frameworks. Existing approaches such as \textit{Distance Metric Learning (DML)} and \textit{Linear Discriminant Analysis (LDA)} provide foundational tools for learning similarity structures from labeled data, but they fall short in addressing key requirements for policy-relevant matching.

DML seeks to learn a transformation of the covariate space that minimizes distances between similar pairs and maximizes distances between dissimilar ones. Let \( x_i, x_j \in \mathbb{R}^p \) be two observations in a $p$-dimensional covariate space. A common formulation of DML, introduced by \citet{xing2002distance}, aims to learn a Mahalanobis distance associated with a symmetric positive semi-definite matrix \( A \in \mathbb{R}^{p \times p} \), such that similar pairs are close and dissimilar pairs are far apart. Let \( \mathcal{S} \) denote a set of similar pairs (e.g., matched units), and \( \mathcal{D} \) denote a set of dissimilar pairs. The learning objective can be formulated as,
\begin{equation*}
\min_{A \succeq 0} \sum_{(x_i, x_j) \in \mathcal{S}} d_A(x_i, x_j) \quad \text{subject to} \quad \sum_{(x_i, x_j) \in \mathcal{D}} d_A(x_i, x_j) \geq 1,
\end{equation*}
where
\begin{equation*}
	 d_A(x_i, x_j) = \sqrt{(x_i - x_j)^T A (x_i - x_j)}.
\end{equation*}
This framework is highly flexible and can accommodate supervision to tailor distances for downstream tasks such as clustering, retrieval, or matching. 

To improve robustness, subsequent methods introduced regularization. \citet{si2006collaborative} added a Frobenius norm penalty on the Mahalanobis distance generating matrix $A$, while \citet{hoi2010semi} proposed a Laplacian regularizer independent of \( \mathcal{S} \) and \( \mathcal{D} \). \citet{globerson2005metric} introduced an alternative probabilistic formulation that minimized the Kullback-Leibler divergence between empirical pairing indicators and a distance-induced conditional distribution,
\begin{equation*}
	 \min_{A \succeq 0} \sum_{i,j} \text{KL}\Big[ \mathbbm{1}(x_j \text{ paired with } x_i) \, \big\| \, \mathbb{P}_A(x_j \mid x_i) \Big],
\end{equation*} 
where $\mathbb{P}_A(x_j \mid x_i) \propto \exp(-d_A(x_i, x_j))$ given $x_i$. This formulation treats metric learning as a probabilistic matching problem, bridging metric learning and soft clustering.


Meanwhile, LDA, originally developed by \citet{fisher1936use} and further advanced by a series of works at the end of last century (\citet{hastie1994flexible}, \citet{hastie1995penalized}, \citet{hastie1995discriminant}, \citet{hastie1996discriminant}, etc.), also learns a transformation of the feature space. In contrast, its goal is to identify a low-dimensional projection that maximizes the separation between groups while minimizing within-group variance. LDA can be interpreted as an approximation of the Bayes classifier, which yield the smallest possible total number of misclassified observations \citep{gareth2023introduction}. Nevertheless, LDA prioritizes directions that maximize class separability, potentially at the expense of incorporating domain-informed variable importance or other contextual insights.

An alternative similar in spirit to LDA, but offering slightly greater flexibility, was introduced by \citet{RCA} and is known as \textit{Relevant Component Analysis (RCA)}. This approach estimates a task-relevant covariance matrix using equivalence constraints derived from groups of similar observations. RCA was subsequently extended by \citet{DCA} through \textit{Discriminative Component Analysis (DCA)}, which learns a projection by optimizing the ratio of between-group to within-group norms of scatter matrices, thereby establishing connections to both DML and LDA in formulation.


While both DML and LDA provide foundational frameworks for learning similarity metrics, they share key limitations in the context of policy evaluation studies. Specifically, they do not allow for the explicit prioritization of variables based on domain knowledge or policy relevance, are not well-suited for contemporary observational studies involving a vast number of covariates, and often fall short in terms of interpretability. Our proposed method builds on the strengths of these approaches while addressing their limitations, enabling more interpretable, structured, and policy-relevant matching in complex observational studies.

\section{The Proposed Priority-Aware one-to-one Matching Algorithm (PAMA)}
\label{Method}

We propose the \textit{Priority-Aware one-to-one Matching Algorithm} (PAMA), a semi-supervised framework that deals with a training set consisting of partially expert-labeled matched pairs and applies this knowledge to match previously unpaired observations within the training set and in incoming task set data streams. PAMA is designed for policy-relevant observational datasets with a large number of covariates, where not all variables are equally relevant for matching. In many applications, experts informally prioritize certain covariates based on domain knowledge, but these priorities are often not explicitly encoded or quantified in traditional matching approaches.

We consider a semi-supervised learning scenario in which a small portion of observation are matched by experts, while a large number remain unpaired due to the prohibitive cost of expert manual matching. We assume that expert pairing, conducted by a panel of health policy experts, is guided by a latent, unobserved function over covariates, informed by domain knowledge and experience. Our goal is to first learn variable importance and then recover this pairing logic from the labeled pairs and use it to perform matching among the unpaired observations in the training set and in an incoming task set.

\vskip 0.5em
The \textit{training set} consists of four subsets of observations:

\begin{itemize}
	\item $\dot{X}_c = \{ \dot{x}_{c,i} \}_{i=1}^{\dot{\ell}}$ and $\dot{X}_t = \{ \dot{x}_{t,i} \}_{i=1}^{\dot{\ell}}$: control and treated observations that have been matched by experts. We organize them such that the $i$-th control observation $\dot{x}_{c,i}$ is matched to the $i$-th treated observation $\dot{x}_{t,i}$, forming the set of matched pairs
	\begin{equation*}
	\mathcal{S}_0 = \left\{ (\dot{x}_{c,i}, \dot{x}_{t,i}) \right\}_{i=1}^{\dot{\ell}},
	\end{equation*}
	where $\dot{\ell}$ is the number of matched pairs.
	
	\item $\ddot{X}_c = \{ \ddot{x}_{c,i} \}_{i=1}^{\ddot{\ell}_c}$ and $\ddot{X}_t = \{ \ddot{x}_{t,i} \}_{i=1}^{\ddot{\ell}_t}$: control and treated observations available in the training set that are not paired by experts.

\end{itemize}

The \textit{task set} contains two sets of unpaired observations that require matching:

\begin{itemize}
	\item $\tilde{X}_c = \{ \tilde{x}_{c,i} \}_{i=1}^{\tilde{\ell}_c}$ and $\tilde{X}_t = \{ \tilde{x}_{t,i} \}_{i=1}^{\tilde{\ell}_t}$: unpaired control and treated observations, unavailable during model training phase.
\end{itemize}

For notational convenience, we define
\begin{equation*}
	 \dot{X} = \begin{pmatrix} \dot{X}_c \\ \dot{X}_t \end{pmatrix}, \quad
	 \ddot{X} = \begin{pmatrix} \ddot{X}_c \\ \ddot{X}_t \end{pmatrix}, \quad
	 \tilde{X} = \begin{pmatrix} \tilde{X}_c \\ \tilde{X}_t \end{pmatrix},\quad {\rm where}\;\; \dot{X}_c = \begin{pmatrix} \dot{x}_{c,1}^T \\ \vdots \\ \dot{x}_{c,\dot{\ell}}^T \end{pmatrix},
\end{equation*}
and similar for $\dot{X}_t, \ddot{X}_c, \ddot{X}_t, \tilde{X}_c$, and $\tilde{X}_t$, where $\dot{X}$ denotes the set of expert-matched observations, $\ddot{X}$ denotes the set of unpaired observations in the training set, and $\tilde{X}$ denotes the task set requiring matching. We do not require the number of unpaired control and treated observations to be equal in either the training or task sets. That is, $\ddot{\ell}_c \ne \ddot{\ell}_t$ and $\tilde{\ell}_c \ne \tilde{\ell}_t$ are allowed. However, we impose the following assumption to ensure that the matching problem is well-posed:

\begin{condition}
	\label{pairing_restrain}
	Under the true data-generating mechanism, each observation in the population has a potential counterpart that constitutes a meaningful match. The training and task sets are random subsamples from this population and may only include subsets of the true pairs. Therefore, within any given dataset, each observation either has a valid match or does not; unmatched units may occur due to random sampling or incomplete data.
\end{condition}

This condition ensures the existence of a meaningful underlying structure to recover, even when expert pairing is partial and the data is imbalanced. It does not imply that every treated observation must be matched, nor that control units outnumber treated ones; instead, it provides the conceptual foundation for learning a pairing logic that generalizes beyond the labeled pairs. In addition, the condition ensures that the distribution of each covariate is consistent across the paired and unpaired datasets for both the control and treated groups. Specifically, for each covariate $k=1,\ldots,p$, the marginal distribution of $x_{c,i,k}$ is assumed to be the same across $\dot{X}_c$, $\ddot{X}_c$, and $\tilde{X}_c$; similarly for $x_{t,i,k}$ across $\dot{X}_t$, $\ddot{X}_t$, and $\tilde{X}_t$.

In practice, it is often reasonable to assume that the expert-based matching rule (pairing logic) can be approximated by a smooth distance or score function. Accordingly, the proposed \textit{Priority-Aware one-to-one Matching Algorithm} (PAMA) learns a quadratic score function to quantify the dissimilarity between control and treatment observations and and performs one-to-one matching based on a tunning threshold.

\subsection{PAMA with Paired Observations}

For simplicity of illustration, we begin with a setting where the number of expert-paired observations is comparable to the number of unpaired observations in the training set, that is, ${\ell} \asymp \min \{  \dot{\ell}_c,\dot{\ell}_t \}$. We denote the quadratic score function used to approximate the expert matching rule as
\begin{equation*}
	S_{\beta}(x_i, x_j) = \beta^T (x_i - x_j)(x_i - x_j)^T \beta,
\end{equation*}
where $x_i$ and $x_j$ are two observations and $\beta \in \mathbb{R}^p$ reflects the importance of each covariate. Heuristically, the learned covariate importance measure $\beta$ should lead to small score differences for expert-matched pairs and, conversely, larger score differences for unmatched pairs.

Leveraging this intuition, we construct an objective function combining a loss term, penalizing large distances among expert-paired observations, and a reward term, encouraging large distances among unmatched observations. To formalize this, we define two adjacency matrices, $W^w$ and $W^b$, which identify matched pairs and unmatched control-treatment pairs among the $2\dot{\ell}$ expert-reviewed observations, respectively,
\begin{gather*}
	W^{w} = (W^w_{ij})_{1\leq i,j \leq 2{\dot{\ell}}}, \quad \text{where} \quad W^w_{ij} = \mathbbm{1}(|i-j| = \dot{\ell}), \\
	W^{b} = (W^b_{ij})_{1\leq i,j \leq 2\dot{\ell}}, \quad \text{where} \quad W^b_{ij} = \mathbbm{1}(|i-j| \neq \dot{\ell}) \cdot \left( \mathbbm{1}(i \leq \dot{\ell} < j) + \mathbbm{1}(i > \dot{\ell} \geq j) \right).
\end{gather*}
With $\dot{x}_i^T$ be the $i-$th row of $\dot{X}$, the \textit{loss} and \textit{reward} functions are then defined as,
\begin{align*} 
	Loss_w(\beta) &=  \frac{1}{{2\dot{\ell}}} \sum_{i,j=1}^{2\dot{\ell}} S_{\beta}(\dot{x}_i, \dot{x}_j) W^w_{ij}= \frac{1}{\dot{\ell}} (\dot{X}\beta)^T (I_{2\dot{\ell}} - W^w) (\dot{X}\beta)\\
	&\triangleq \frac{1}{\dot{\ell}} (\dot{X}\beta)^T L_w (\dot{X}\beta), \\
	Reward_b(\beta) &= \frac{1}{2{\dot{\ell}}^2} \sum_{i,j=1}^{2\dot{\ell}} S_{\beta}(\dot{x}_i, \dot{x}_j) W^b_{ij} = \frac{1}{{\dot{\ell}}^2} (\dot{X}\beta)^T \left( (\dot{\ell}-1) I_{2\dot{\ell}} - W^b \right) (\dot{X}\beta) \\
	&\triangleq \frac{1}{\dot{\ell}^2} (\dot{X}\beta)^T L_b (\dot{X}\beta), 
\end{align*}
where $L_w$ and $L_b$ are commonly referred to as Laplacian matrices in graph theory.
To balance loss minimization and reward maximization, we define the following objective,
\begin{equation*}
	g(\beta) = \frac{Reward_b(\beta)}{Loss_w(\beta) + Penal(\beta)},
\end{equation*}
where $Penal(\beta) = \lambda \|\beta\|_2^2$ is an $\ell_2$ regularization term to prevent overfitting, with $\lambda \geq 0$ denoting a tuning parameter. The estimator $\hat{\beta}$ for the covariate importance measure is obtained by solving
\begin{equation}
	\hat{\beta} = \arg\max_{\beta \in \mathcal{M}_{p\times1}} g(\beta) \quad \text{subject to} \quad \|\beta\|_2 = 1, \label{second opt}
\end{equation}
where $\mathcal{M}_{a\times b}$ is the set of matrices with size $a\times b$. Readily, we have the following proposition,
\begin{proposition}
	\label{prop1}
	Let $\Sigma_b = \dot{X}^T L_b \dot{X}$ and $\Sigma_w = \dot{X}^T L_w \dot{X} / \dot{\ell} + \lambda I_p$. Then, $\hat{\beta}$ is the eigenvector corresponding to the largest eigenvalue of $\Sigma_w^{-1} \Sigma_b$.
\end{proposition}

\subsection{Incorporating Unpaired Observations} The optimization problem \eqref{second opt} initially uses only expert-matched pairs. While PAMA can still operate when the number of expert matched pairs is relatively small, i.e., $\dot{\ell} \ll \min \{ \ddot{\ell}_c, \ddot{\ell}_t \}$, relying solely on a small number of expert-matched pairs may yield an unstable or biased estimate of $\beta$. This challenge becomes more acute when the empirical distribution of covariates differs notably across $\dot{X}$, $\ddot{X}$ and $\tilde{X}$. In such a scenario, we would want PAMA to utilize both $\dot{X}$ and $\ddot{X}$ so that its training data encompasses a broader range of distributions.

\vskip 0.5em

We propose a self-taught learning procedure to leverage the unpaired observations $\ddot{X}$ in the training set by iteratively imputing pairing among them. At the $(k+1)$-th iteration,
\begin{enumerate}
\item Using the current estimate $\hat{\beta}^{(k)}$, search among unpaired observations $\ddot{X}_c$ and $\ddot{X}_t$ for the pair with the smallest score difference.
\item Add the best-matched pair into $\dot{X}$ and remove them from $\ddot{X}$.
\item Update $\hat{\beta}^{(k+1)}$ by re-solving \eqref{second opt}.
\end{enumerate}
More generally, one can repeat this process for a fixed number $\tau_1$ of additions per iteration, or until a convergence criterion is met. While within PAMA,

\begin{enumerate}[label=$\bullet$]
	\item \textbf{What if the smallest score difference is large among unpaired observations?} Except for the extreme case where none of the observations in $\ddot{X}$ should be paired, we can adjust the smallest score differences among unpaired observations to be comparable to the score differences among matched pairs in $\dot{X}$ by introducing a reasonable pairing threshold and a tunable matching critical value $\epsilon$ (details provided later in Algorithm \ref{MCEM1}).
	\item \textbf{How to choose $\bf \tau_1$?}  
	$\tau_1$ is a tuning parameter that trades off between the stability of each update and the speed of convergence. It can be set as a small proportion of the unpaired pool size or determined adaptively based on improvements in $g(\beta)$.
\end{enumerate}

%

The proposed PAMA is presented in detail in Algorithm \ref{MCEM1}.

\begin{algorithm}[htbp!]
	\small
	\caption{Priority-Aware one-to-one Matching Algorithm (PAMA)}
	\label{MCEM1}
	\SetAlgoLined
	
	\KwIn{Training set $\left( \dot{X}_c^T, \dot{X}_t^T, \ddot{X}_c^T, \ddot{X}_t^T \right)^T$}
	
	\textbf{Initialize:}  
	Set iteration counter $k=0$.  
	Estimate initial $\hat{\beta}^{(0)}$ by solving (\ref{second opt}) using the expert-paired observations $\dot{X} = (\dot{X}_c^T, \dot{X}_t^T)^T$.  
	Define $\dot{\ell}^{(0)} = \dot{\ell}$, $\dot{X}_c^{(0)} = \dot{X}_c$, $\dot{X}_t^{(0)} = \dot{X}_t$, $\ddot{X}_c^{(0)} = \ddot{X}_c$, $\ddot{X}_t^{(0)} = \ddot{X}_t$.  
	Set a small threshold $\Delta_0 > 0$ and a tuning integer $\tau_1 > 0$ (number of pairs added per iteration).
	\vskip 0.5em
	
	\While{$\|\hat{\beta}^{(k+1)} - \hat{\beta}^{(k)}\|_{\infty} > \Delta_0$}{
		\For{$s = 1$ to $\tau_1$}{
			\textbf{Find} the control-treatment pair $(\ddot{x}_{c,i_s}, \ddot{x}_{t,j_s})$ from $\ddot{X}_c^{(k)}$ and $\ddot{X}_t^{(k)}$ minimizing $S_{\hat{\beta}^{(k)}}(\cdot,\cdot)$:
			\[
			(\ddot{x}_{c,i_s}, \ddot{x}_{t,j_s}) = \arg\min_{\ddot{x}_{c,i} \in \ddot{X}_c^{(k)},\, \ddot{x}_{t,j} \in \ddot{X}_t^{(k)}} S_{\hat{\beta}^{(k)}}(\ddot{x}_{c,i}, \ddot{x}_{t,j}).
			\]
			\textbf{Update} the unpaired sets:  
			$\ddot{X}_c^{(k)} \leftarrow \ddot{X}_c^{(k)} \setminus \{ \ddot{x}_{c,i_s} \}$,  
			$\ddot{X}_t^{(k)} \leftarrow \ddot{X}_t^{(k)} \setminus \{ \ddot{x}_{t,j_s} \}$.
		}
		\textbf{Augment} the paired set:  
		\[
		\dot{X}_c^{(k+1)} = \Big( \left( \dot{X}_c^{(k)} \right)^T, \ddot{x}_{c,i_1}, \dots, \ddot{x}_{c,i_{\tau_1}} \Big)^T,
		\quad
		\dot{X}_t^{(k+1)} = \Big(  \left( \dot{X}_t^{(k)} \right)^T, \ddot{x}_{t,j_1}, \dots, \ddot{x}_{t,j_{\tau_1}} \Big)^T,
		\]
		and define $\dot{X}^{(k+1)} = \Big( \left( \dot{X}_c^{(k+1)} \right)^T, \left( \dot{X}_t^{(k+1)} \right)^T \Big)^T$.
		
		\textbf{Update} $\hat{\beta}^{(k+1)}$ by re-solving (\ref{second opt}) with $\dot{X}^{(k+1)}$.  
		Increment $k \leftarrow k+1$.
	}
	\textbf{Set} final estimate $\hat{\beta} = \hat{\beta}^{(k)}$.
	
	\BlankLine
	\KwIn{Task set $\left( \tilde{X}_c^T, \tilde{X}_t^T \right)^T$}
	
	\textbf{Initialize:}  
	Set $\tilde{X}_c^{(0)} = \tilde{X}_c$, $\tilde{X}_t^{(0)} = \tilde{X}_t$, $\mathcal{M}_{out} = \emptyset$.  
	Specify a matching threshold $\epsilon > 0$.
	
	\For{$s = 1,2,\ldots$}{
		\textbf{Find} the control-treatment pair $(\tilde{x}_{c,i_s}, \tilde{x}_{t,j_s})$ minimizing $S_{\hat{\beta}}(\cdot,\cdot)$:
		\[
		(\tilde{x}_{c,i_s}, \tilde{x}_{t,j_s}) = \arg\min_{\tilde{x}_{c,i} \in \tilde{X}_c^{(s-1)},\, \tilde{x}_{t,j} \in \tilde{X}_t^{(s-1)}} S_{\hat{\beta}}(\tilde{x}_{c,i}, \tilde{x}_{t,j}).
		\]
		\eIf{($s \leq \min\{\tilde{\ell}_c, \tilde{\ell}_t\}$) \textbf{and} ($S_{\hat{\beta}}(\tilde{x}_{c,i_s}, \tilde{x}_{t,j_s}) \leq \epsilon$)}{
			\textbf{Update} $\mathcal{M}_{out} = \mathcal{M}_{out} \cup \{ (\tilde{x}_{c,i_s}, \tilde{x}_{t,j_s}) \}$.\\
			\textbf{Remove} $\tilde{x}_{c,i_s}$ from $\tilde{X}_c^{(s-1)}$ to form $\tilde{X}_c^{(s)}$, and similarly, \textbf{Remove} $\tilde{x}_{t,j_s}$ from $\tilde{X}_t^{(s-1)}$ to form $\tilde{X}_t^{(s)}$.
		}{
			\textbf{Break} the loop.
		}
	}
	
	\KwOut{Final variable importance vector $\hat{\beta}$ and matched pairs $\mathcal{M}_{out}$ from the task set.}
	
\end{algorithm}

\subsection{Fortifying the Robustness of PAMA}

The iterative structure and the inherently exclusive nature of a one-to-one matching algorithm like PAMA make it vulnerable to error accumulation if incorrectly paired observations are included in early iterations. Even the best-scoring pairs selected at each step may not be an underlying true match, and these errors can degrade the quality of the learned $\beta$ over time. This issue is particularly pronounced when the initial estimate $\hat{\beta}^{(0)}$ is suboptimal or when the data are highly noisy.

To enhance robustness, we propose adding an \textit{exclusion step} alongside the inclusion step, inspired by forward-backward selection methods commonly used in variable selection procedures. This additional step increases variability in the iterative pairing process, preventing overfitting and enhancing stability. Specifically, in the $k$-th iteration,
\begin{enumerate}
	\item \textbf{Inclusion Step:} Identify the top $\tau_1$ candidate pairs with the smallest score differences,
      \begin{equation*}
	 	( \ddot{x}_{c,i_1}, \ddot{x}_{t,j_1} ), ( \ddot{x}_{c,i_2}, \ddot{x}_{t,j_2} ), \ldots, ( \ddot{x}_{c,i_{\tau_1}}, \ddot{x}_{t,j_{\tau_1}} ).
      \end{equation*}
	\item \textbf{Leave-One-Out Re-Estimation:} For each candidate pair, leave it out and re-estimate $\beta$ using the remaining pairs,
	\begin{equation*}
	\hat{\beta}^{(k,s)} = \arg\max_{ \Vert \beta \Vert_2=1, \beta \in \mathcal{M}_{p\times1}} \frac{\sum_{s' \neq s} S_{\beta}(\ddot{x}_{c,i_{s'}}, \ddot{x}_{t,j_{s'}})}{\lambda \|\beta\|_2^2 + \sum_{s_1 \neq s_2, s_1 \& s_2 \neq s} S_{\beta}(\ddot{x}_{c,i_{s_1}}, \ddot{x}_{t,j_{s_2}})}.
	\end{equation*}	
	\item \textbf{Pairing Adjustment:} Using $\hat{\beta}^{(k,s)}$, find the best new match for the left-out observation,
\begin{align*}
	\ddot{x}_{t,j'_s} &= \arg\min_{\dot{x}_{t,u} \in \ddot{X}_t^{(k)}} S_{\hat{\beta}^{(k,s)}}(\ddot{x}_{c,i_s}, \ddot{x}_{t,u}), \\
	\ddot{x}_{c,i'_s} &= \arg\min_{\ddot{x}_{c,u} \in \ddot{X}_c^{(k)}} S_{\hat{\beta}^{(k,s)}}(\ddot{x}_{c,u}, \ddot{x}_{t,j_s}).
\end{align*}
	\item \textbf{Exclusion Step:} Exclude the original pair $(\ddot{x}_{c,i_s}, \ddot{x}_{t,j_s})$ if either best match differs, i.e., if $i'_s \neq i_s$ or $j'_s \neq j_s$. 
	\vskip 0.5em
	\item \textbf{Stop Criterion:} The iterative procedure terminates either when the critical convergence threshold $\Delta_0$ is reached, or if all $\tau_1$ candidate pairs are excluded during an iteration, indicating insufficient pairing quality.
\end{enumerate}

The exclusion step serves as a safeguard against including unreliable pairs, thereby improving the overall robustness of PAMA. However, it should be used judiciously to prevent excessive computational burden.

\subsection{Incorporating Time-Dependent Features and Matching Unequal Size Matrices}

In many observational studies, study units are observed over different durations, resulting in time-varying covariates with irregular observation periods. In this subsection, we describe how to extend PAMA to manage such time-dependent data.

Suppose an observation $\dot{x}_{c,i}$ from the control group $\dot{X}_c$ is measured over $k_{c,i}$ time points. It can be represent in a matrix:
\begin{equation*}
\dot{x}_{c,i} = 
\begin{bmatrix}
	\dot{x}_{c,i,1,1} & \cdots & \dot{x}_{c,i,1,p} \\
	\vdots & \ddots & \vdots \\
	\dot{x}_{c,i,k_{c,i},1} & \cdots & \dot{x}_{c,i,k_{c,i},p}
\end{bmatrix},
\end{equation*}
where each row corresponds to a $p$-dimensional covariate vector observed at a specific time point. Since observations may differ in the number and timing of measurements, the loss and reward functions in PAMA must be adapted to compare multivariate time series of unequal length. To address this, we apply \textit{Dynamic Time Warping} (DTW) \citep[e.g.,][]{muller2007information} to measure similarity between the time-dependent features.

To elaborate, we project the covariate matrix onto a one-dimensional feature space determined by the variable importance measure $\beta$, which is yet to be learned,
\begin{equation*}
z_{c,i,r} = \dot{x}_{c,i,r} \cdot \beta, \quad r=1,\dots,k_{c,i},
\end{equation*}
resulting in a sequence $Z_{c,i} = (z_{c,i,1}, \dots, z_{c,i,k_{c,i}})$ for each unit. For arbitrary $(i,j)$, the distance between the two sequences $Z_{c,i}$ and $Z_{t,j}$ (from the control and the treated groups, respectively) can be defined using either,
\begin{align*}
	{\rm DTW}(Z_{c,i}, Z_{t,j}) &= \min_{A \in \mathcal{A}(k_{c,i},k_{t,j})} \langle D_{Z_{c,i},Z_{t,j}}, A \rangle_F, \\
	 \;\; \text{or} \quad {\rm DTW}_{\gamma}(Z_{c,i}, Z_{t,j}) &= -\gamma \log \sum_{A \in \mathcal{A}(k_{c,i},k_{t,j})} \exp\left( -\frac{1}{\gamma} \langle D_{Z_{c,i},Z_{t,j}}, A \rangle_F \right).
\end{align*}
Here, $D_{Z_{c,i},Z_{t,j}} \in \mathcal{M}_{k_{c,i} \times k_{t,j}}$ is the cost matrix with entries $D_{rs} = (z_{c,i,r} - z_{t,j,s})^2$, $1\leq r\leq k_{c,i}$ and $1\leq s\leq k_{t,j}$. The set $\mathcal{A}(k_{c,i},k_{t,j}) \subset \mathcal{M}_{k_{c,i} \times k_{t,j}}$ denotes the allowable alignment matrices. Specifically, for arbitrary $A\in \mathcal{A}(k_{c,i},k_{t,j})$, we require that $A_{r,s}\in \{0,1\}$ and the set $\{ (r,s): A_{rs}=1, 1\leq r\leq k_{c,i}, 1\leq s\leq k_{t,j} \}$ forms a path connecting $(1,1)$ and $(k_{c,i},k_{t,j})$ \citep[e.g.,][]{wang1997alignment}. Additionally, $\langle\cdot , \cdot \rangle_F$ represents the inner product induced by the Frobenius norm.

\vskip 0.5em

Heuristically, by substituting the DTW distance into PAMA, the loss and reward functions become,
\begin{align*}
	Loss_w(\beta) &= \frac{1}{\ell} \sum_{i,j=1}^{2\ell} \min_{A_{ij} \in \mathcal{A}(k_{c,i},k_{t,j})} \langle D_{Z_{c,i}, Z_{t,j}}, A_{ij} \rangle_F \cdot W^w_{ij},\\
	Reward_b(\beta) &= \frac{1}{\ell^2} \sum_{i,j=1}^{2\ell} \min_{A_{ij} \in \mathcal{A}(k_{c,i},k_{t,j})} \langle D_{Z_{c,i}, Z_{t,j}}, A_{ij} \rangle_F \cdot W^b_{ij}.
\end{align*}
Thus, the variable importance measure $\beta$ can still be estimated by maximizing the reward-to-loss-and-penalty ratio, as described previously, while naturally incorporating time-dependent features through DTW alignment.

\subsection{Asymptotic Properties of $\hat{\beta}$}

Notably, the proposed PAMA is \textit{model-free}. It makes no assumptions regarding the existence of a "true" oracle variable importance measure $\beta$, nor does it assume a specific form for the underlying matching rule.

Accordingly, we assess the quality of the learned $\hat{\beta}$ based on certain \textit{relative consistency} rather than oracle recovery. That is, we evaluate how closely $\hat{\beta}$, learned from a limited expert-matched pairs, approximates $\hat{\beta}^*$, which would have been obtained if the full expert pairing structure were available.

\subsubsection{Recovering Experts' Knowledge from the Master Database}

Consider a setting where the expert-labeled dataset $\dot{X}$ is a subsample of a larger \textit{master database} containing the full matching structure. For notational convenience, we denote the master database as
\begin{equation*}
X^T = (X_c^T, X_t^T) = (\dot{X}_c^T, \ddot{X}_c^{'T},  \dot{X}_t^T, \ddot{X}_t^{'T}),
\end{equation*}
where $\dot{X}^T = (\dot{X}_c^T, \dot{X}_t^T)$ refers to the observations included in the expert-labeled training set, and $\ddot{X}^{'T} = (\ddot{X}_c^{'T}, \ddot{X}_t^{'T})$ is a subset of $\ddot{X}$ representing the remaining paired observations. We emphasize that all observations in the master database $X$ are paired, contrast to the training data sets $(\dot{X}^T,\ddot{X}^T)$.

Extending from the previous notion and setting, the matrices $\dot{X}_c, \dot{X}_t \in \mathcal{M}_{\dot{\ell} \times p}$ contain $\dot{\ell}$ control-treatment pairs labeled by experts, forming the set
\begin{equation*}
	 \mathcal{S}_0 = \left\{ (\dot{x}_{c,i}, \dot{x}_{t,i}) \right\}_{i=1}^{\dot{\ell}}.
\end{equation*}
Similarly, $\ddot{X}_c^{'}, \ddot{X}_t^{'} \in \mathcal{M}_{\ddot{\ell} \times p}$ contain $\ddot{\ell}$ additional control-treatment pairs from the master database, which could be accessed by putting aside the prohibitive cost of expert manual labeling, and are assumed to form the set
\[
\mathcal{S}_0' = \left\{ (\ddot{x}_{c,i}, \ddot{x}_{t,i}) \right\}_{i=1}^{\ddot{\ell}}.
\]
The union $\mathcal{S}_1 = \mathcal{S}_0 \cup \mathcal{S}_0'$ represents the full set of matched pairs in the master database. Let $\hat{\beta}^{(0)}$ denote the variable importance measure estimated from $\dot{X}$ according to (\ref{second opt}), and let $\hat{\beta}^*$ denote the counterpart estimated from the full master database $X$. Thus, under the following moment constraint on the covariate vectors, we establish the relative consistency result,

\begin{assumption}\label{assmp}
	For all random observations in the master database $\{ X_{c,k}, X_{t,k}, k=1,\cdots, \dot{\ell}+\ddot{\ell}  \}$, we assume
	\begin{enumerate}[label=(\roman*)]
		\item For some non-degenerate covariance matrix $\Sigma$ with eigenvalues $\lambda_1 \geq \cdots \geq \lambda_p >0$,
		\begin{equation*}
			\mathbb{E}(X_{c,k}) = \mathbb{E}(X_{t,k}) = 0, \quad \text{and} \quad \mathrm{Var}(X_{c,k}) = \mathrm{Var}(X_{t,k}) = \Sigma.
		\end{equation*}
		\item The leading eigengap of $\Sigma$ is bounded away from zero, i.e., $\exists c>0$, s.t., $(\lambda_{1} - \lambda_2)\geq c$.
		\item $(1+\lambda_{1})^6 p=\min\{ o(\dot{\ell}) , o(\ddot{\ell}) \}$, i.e., the spectral norm of $\Sigma$ would not increase too rapid compare to the sample size.
		\item $X_c$ and $X_t$ are row-wise $\sigma$-sub-Gaussian.
	\end{enumerate}
\end{assumption}


\begin{theorem}\label{main_theorem}
	Under assumption \ref{assmp}, as $p, \dot{\ell}, \ddot{\ell} \to \infty$ with $\dot{\ell}/\ddot{\ell} \to C$ for some $C \in (0,1]$, the estimator $\hat{\beta}^{(0)}$ satisfies
	\begin{equation*}
		 \mathrm{dist}(\hat{\beta}^{(0)}, \hat{\beta}^*) \triangleq \left\| \hat{\beta}^{(0)} \hat{\beta}^{(0)T} - \hat{\beta}^* (\hat{\beta}^*)^T \right\|_2 = O\Big(  (1+\lambda_1)^3 \sqrt{\frac{p}{\dot{\ell}}} \Big),
	\end{equation*}
	which measures the difference between the subspaces spanned by the eigenvectors obtained using only the labeled data and those obtained using the whole master dataset.
\end{theorem}

\noindent The proof is deferred to the Appendix.

\section{Simulations}
\label{simulations}

We applied PAMA, along with RCA \citep{RCA}, DCA \citep{DCA}, Euclidean distance matching, and propensity score matching (denoted as PCM), to both synthetic and real-world datasets. Empirical results demonstrate that PAMA consistently achieves higher matching accuracy compared with existing methods. We also investigate the robustness of PAMA under model misspecifications and assess the potential benefits of the self-taught learning extension.

To provide context for the simulation results, we briefly sketch the baseline performance of a uniformly random matching procedure before presenting the results. We emphasize that achieving high matching accuracy in bipartite (one-to-one) matching problems is intrinsically difficult, especially in large candidate pools. Unlike binary classification, where random guessing yields 50\% accuracy, bipartite matching problems are closely associated with derangements in combinatorial mathematics. Under a uniformly random matching procedure, the limiting probability of observing a derangement is $e^{-1}$, the expected number of correct pairs is always one, and the probability of correctly pairing even a few observations decreases exponentially, as illustrated in Table \ref{tab:random_matching}. Consequently, in our simulation evaluation (for the aforementioned matching algorithms), we expect a matching accuracy ranging from 30\% to 80\%, rather than the 90+\% accuracy typically observed in other types of matching tasks.

\begin{table}[h]
	\centering
	\begin{tabular}{ |c|c|c| }
		\hline
		Number of Pairs & Matching Accuracy & Probability of Derangements (No Correct Matches) \\
		\hline
		5  & 0.2   & 0.328 \\
		10 & 0.1   & 0.356 \\
		15 & 0.066 & 0.362 \\
		20 & 0.05  & 0.358 \\
		30 & 0.033 & 0.365 \\
		50 & 0.02  & 0.370 \\
		\hline
	\end{tabular}
	\vskip 0.5em
	\caption{Performance of Random Matching Procedures}
	\label{tab:random_matching}
\end{table}


In the synthetic simulations, we assume that ``expert knowledge'' comprises known matching rules. PAMA is trained on a training dataset, where a small portion of observations are paired by ``experts'', and is evaluated on held-out test observations generated according to the same matching rules. In the simulation, the tuning parameter $\lambda$ is selected via four-fold cross-validation over a grid of nine logarithmically spaced values spanning $[10^{-4},\,10^{5}]$. Performance is measured by \textit{matching accuracy}, defined as the proportion of correctly matched pairs relative to the ground-truth expert matching. In contrast to methods that evaluate covariate balance \citep{austin2014comparison}, we directly assess the algorithms based on matching fidelity.

\subsection{Matching Accuracy Comparison}

\subsubsection{When the matching rule coincides with the modeling}

We first simulate data where the true matching rule is a weighted quadratic score function. Control observations $\{x_i^c\}_{i=1}^\ell$ are generated from $\mathcal{N}(\mu_i^c, 0.25\cdot I_p)$ with $\mu_i^c = (i, \dots, i)^T$. Treated observations $\{x_j^t\}_{j=1}^\ell$ are generated from $\mathcal{N}(\mu_i^t, 0.25\cdot I_p)$ with $\mu_i^t = \mu_i^c + (b, 0, \dots, 0)^T$, where $b=0.5$ controls the bias between treatment and control groups. The rationale behind this data generation setup stems from the fact that any two multivariate normal distributions, $\mathcal{N}(\mu_1, \Sigma_1)$ and $\mathcal{N}(\mu_2, \Sigma_2)$, can be transformed via an affine mapping into $\mathcal{N}((b,0,\dots,0)^T, I_p)$ and $\mathcal{N}(0, I_p)$, making this a general applicable framework \citep{normal}. Building on the generated observations, we designate the first two covariates as "principal confounders" by assigning them large weights $0.9/(1.7+0.05p)$, while the remaining covariates are given small weights $0.05/(1.7+0.05p)$ to serve as noisy variables.
 
\subsubsection{When the matching rule does not coincide with the modeling}

In this alternative scenario, we adopt the same local data-generating process as above; however, instead of using the quadratic score matching rule, we assume that the expert matches pairs via conjunctive thresholding: two observations are matched only if their differences on the principal confounders (i.e., the first and second coordinates of the observation) fall within a specified threshold $c$, which is set to $1.5$ in the simulation. In other words, the corresponding score function can be defined as
\begin{equation*}
	\tilde{S}(x_i, x_j) = \Big[ (x_{i1} - x_{j1})^2+(x_{i2} - x_{j2})^2 \Big]\cdot \Big[ \mathbbm{1}\big(  |x_{i1} - x_{j1}| < c   \big)\cdot  \mathbbm{1}\big(  |x_{i2} - x_{j2}| < c \big) \Big]^{-1}.
\end{equation*}

\subsubsection{Results}

Figures \ref{fig:Linear} and \ref{fig:Banded} show boxplots of matching accuracy across $100$ simulation replications. Different aspect ratio, defined as the number of matched pairs divided by the covariate dimension $p$, have been used in the evaluation. In both scenarios---where the matching rule is correctly specified and where it is misspecified---RCA, DCA and the proposed semi-supervised method PAMA all benefit from higher aspect ratios, while Euclidean distance matching and propensity score matching show no improvements. In the correctly specified (linear) scenario, PAMA significantly outperforms other competitors, particularly as $p$ grows. In the thresholded (nonlinear) setting, although all algorithms are affected by misspecification, PAMA maintains competitive matching accuracy and robustness. Overall, PAMA exhibits two desirable properties:
\begin{itemize}
	\item Matching accuracy improves as the aspect ratio increases.
	\item Matching accuracy remains stable even as dimensionality increases, provided the aspect ratio is fixed.
\end{itemize}

\begin{figure}[h]
	\centering
	\includegraphics[width=.8\textwidth]{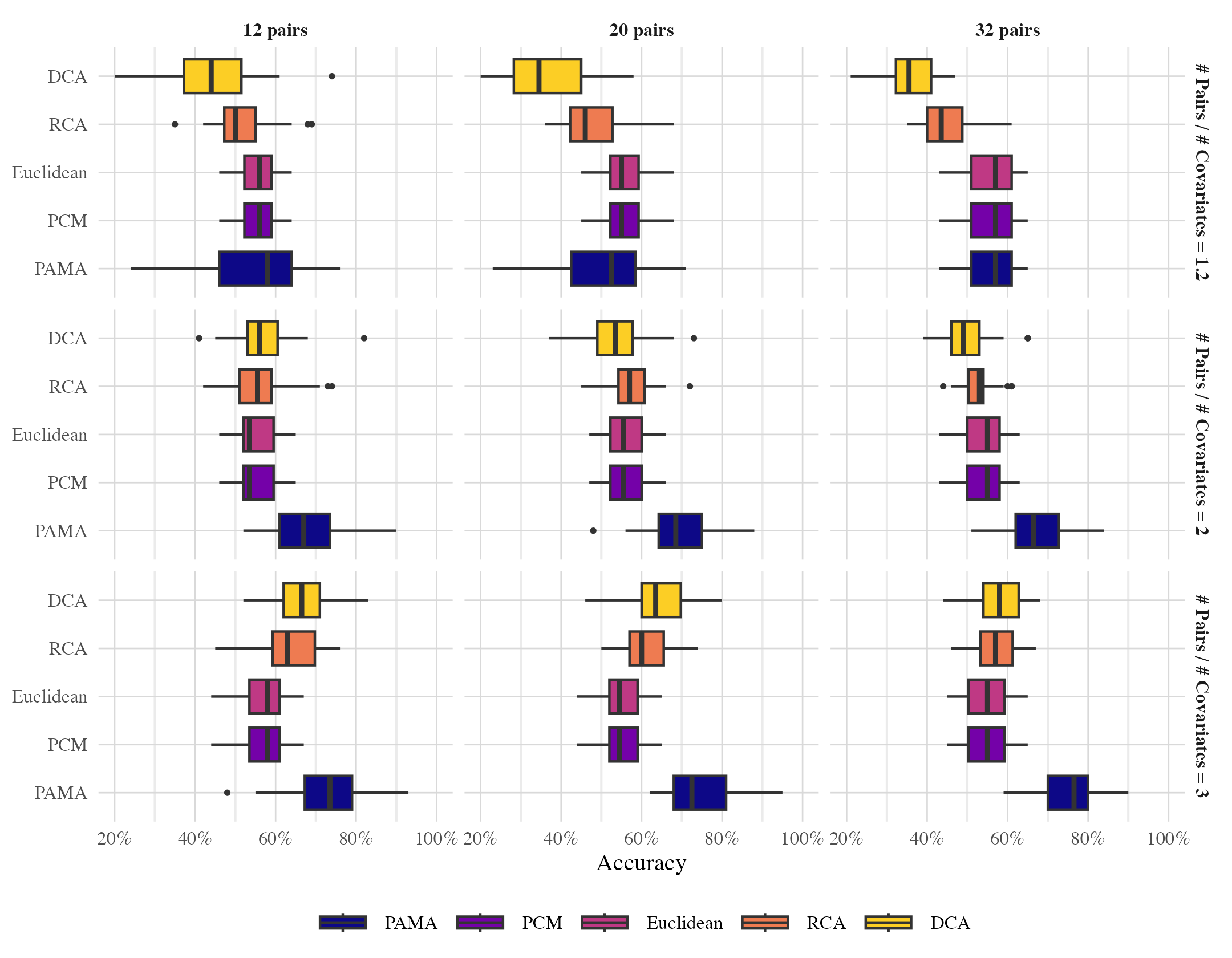}
	\caption{Matching Accuracy in Quadratic Score Setting}
	\label{fig:Linear}
\end{figure}

\begin{figure}[h]
	\centering
	\includegraphics[width=.8\textwidth]{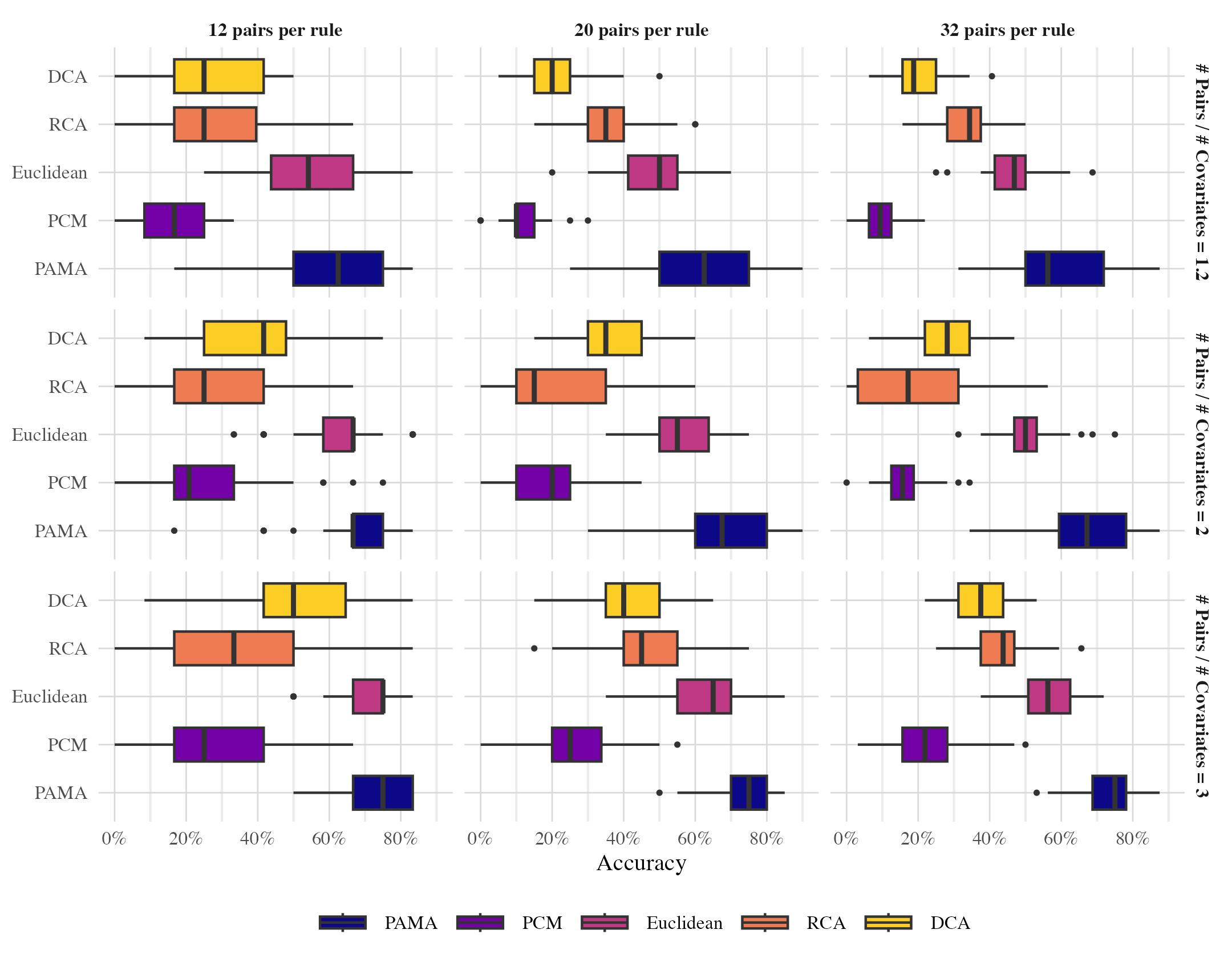}
	\caption{Matching Accuracy in Thresholding Rule Setting}
	\label{fig:Banded}
\end{figure}

\subsection{Algorithm Robustness}

We next investigate the robustness of PAMA to two types of model deviations: covariate correlation and interaction effects. In all settings, the number of covariates is fixed at $p=12$.

\begin{itemize}
	\item {\bf Covariate Correlation}: Correlations among covariates may obscure the contribution of the principal confounders, potentially leading the learned weights $\beta$ to diffuse across noisy variables. To examine this, we gradually increase the pairwise correlations among covariates while holding all other settings constant. As shown in Figure~\ref{fig:Correlation}, PAMA's matching accuracy remains stable across a wide range of correlation strengths, indicating resilience to moderate levels of collinearity.

\begin{figure}[h]
	\centering
	\includegraphics[width=.8\textwidth]{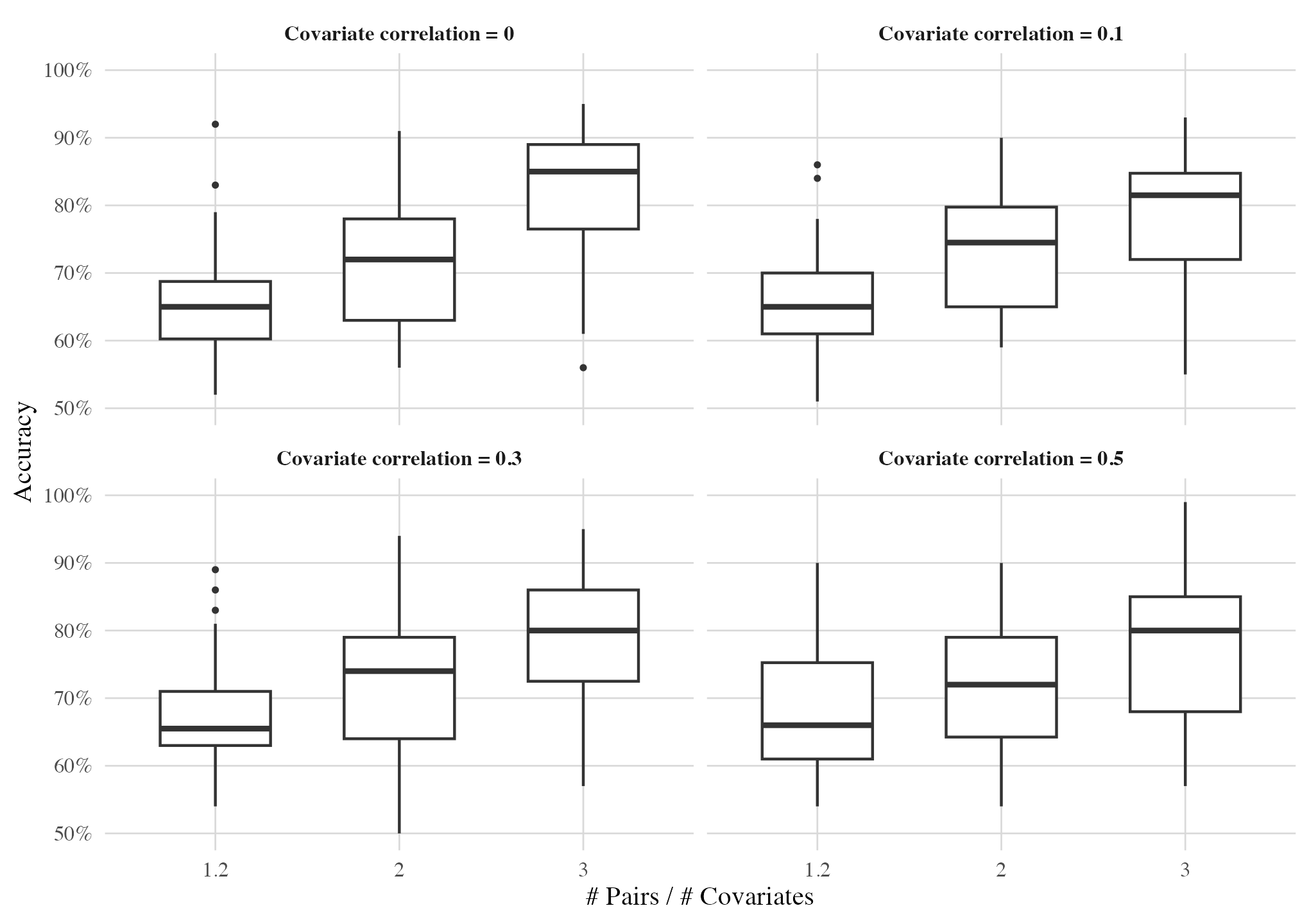}
	\caption{Matching accuracy under varying covariate correlation.}
	\label{fig:Correlation}
\end{figure}
	
	\item {\bf Interaction Effects}: To examine whether PAMA can detect signals arising from interactions between principal confounders and noisy covariates, we introduce second-order interaction terms between randomly selected pairs of these variables and incorporate them into the true matching logic. These terms are not included as explicit features, and thus act as unmodeled nonlinearities in the data-generating process.
	
	Figure~\ref{fig:Interaction} shows that interaction effects reduce the maximum attainable matching accuracy and slow the rate of improvement with additional training pairs. Nonetheless, PAMA continues to improve with more labeled data, demonstrating its capacity to model and learn from unspecified nonlinear structures like interaction-driven patterns.
	
	\begin{figure}[h]
		\centering
		\includegraphics[width=.8\textwidth]{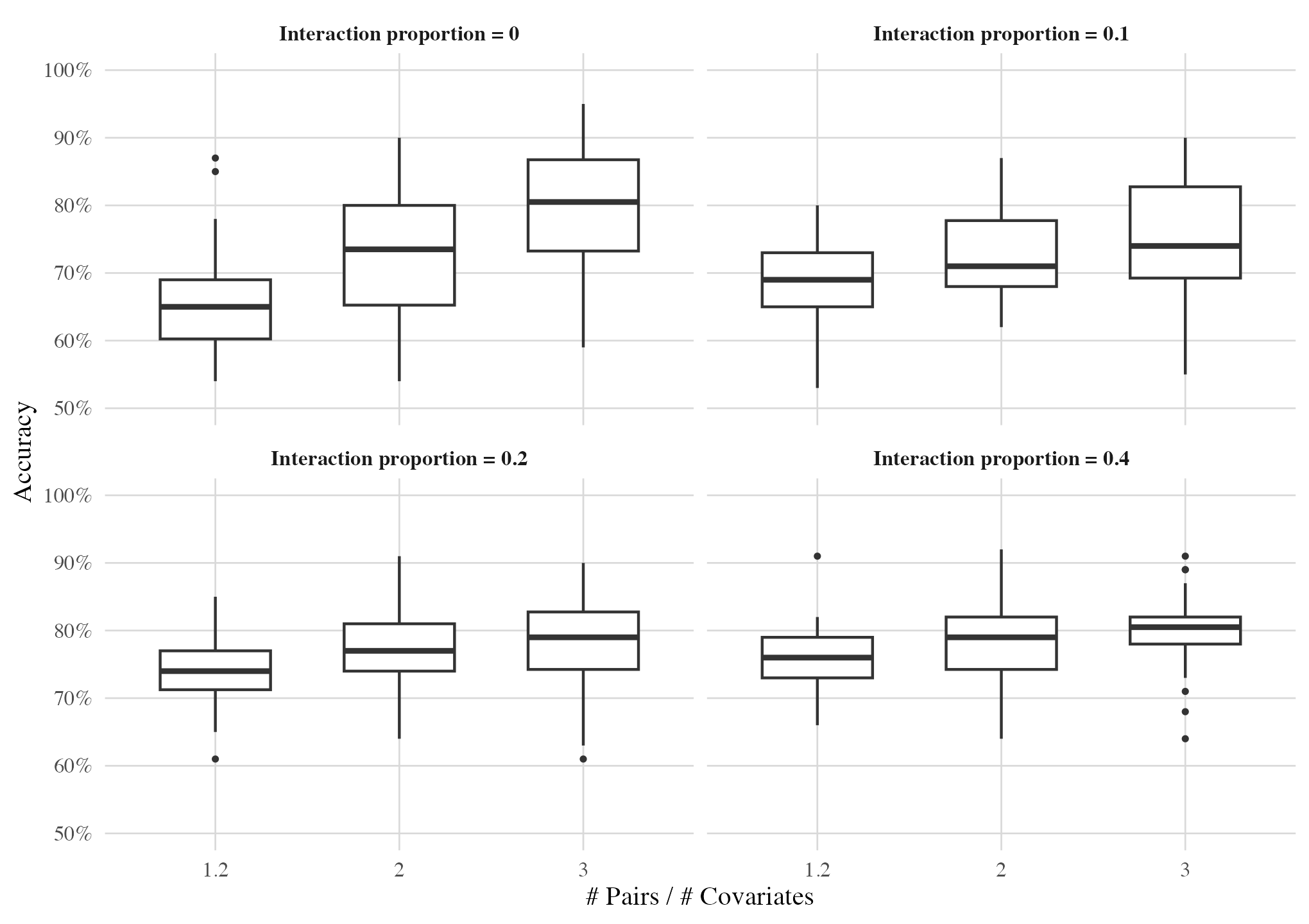}
		\caption{Matching accuracy in the presence of interaction effects.}
		\label{fig:Interaction}
	\end{figure}
	
	Furthermore, to understand how PAMA adjusts to these latent signals, we extract the estimated weights for the noisy covariates and compare the average magnitude of those involved in interaction terms to those that are not. The results, summarized in Table \ref{fig:interaction_weight}, show a consistent upward shift in the weights assigned to variables involved in interactions, suggesting that PAMA partially reallocates attention toward relevant but unmodeled structure.
	
	\begin{table}[h]
		\centering
		\begin{tabular}{|c|c|c|}
			\hline
			\# Training Pairs & \# Interactions & Average Weight Difference ($\times 10$) \\
			\hline
			15 & 1 & 1.39 \\
			15 & 2 & 1.04 \\
			15 & 3 & 0.44 \\
			15 & 5 & 0.41 \\
			\hline
			24 & 1 & 2.62 \\
			24 & 2 & 1.30 \\
			24 & 3 & 0.95 \\
			24 & 5 & 0.75 \\
			\hline
			36 & 1 & 2.66 \\
			36 & 2 & 1.17 \\
			36 & 3 & 1.64 \\
			36 & 5 & 0.86 \\
			\hline
		\end{tabular}
		\vskip 0.5em
		\caption{Average weight increase for interaction-involved variables.}
		\label{fig:interaction_weight}
	\end{table}
\end{itemize}

\subsection{Self-Taught Learning Evaluation}

Finally, we evaluate PAMA's self-taught learning extension. In the correctly specified scenario with 15 initial matched pairs, we vary the number of unmatched observations and track performance over 10 iterations, repeated across 50 trials. In each trial, the initial matching accuracy refers to the accuracy for unmatched observations using weights trained exclusively on matched pairs (i.e., weights obtained from (\ref{second opt})). The improvement in matching accuracy, as a function of initial accuracy, is recorded and presented in Figure \ref{fig:SelfTaught}. Notably, apart from rare cases of negative improvement, the gain curves exhibit a quadratic shape, with self-taught learning being most beneficial when initial performance is moderate.

To operationalize self-taught learning, we propose two complementary calibration strategies.
\begin{itemize}
	\item {\bf Simulation-Based Calibration}: Apply self-taught learning when the initial cross-validated performance falls within a moderate range, and terminate the procedure once accuracy exceeds a predetermined threshold.
	\item {\bf Transfer Learning Calibration}: When related but not identical pre-training data are available, transfer learning can be leveraged to inform the choice of the number of unmatched versus matched observations to include in the self-taught learning phase. Pretraining on related domains enables better estimation of gain curves and facilitates selection of the ratio that yields maximal improvement, potentially enhancing the quadratic-shaped benefits observed in self-taught learning.
\end{itemize}

\begin{figure}[h]
	\centering
	\includegraphics[width=.8\textwidth]{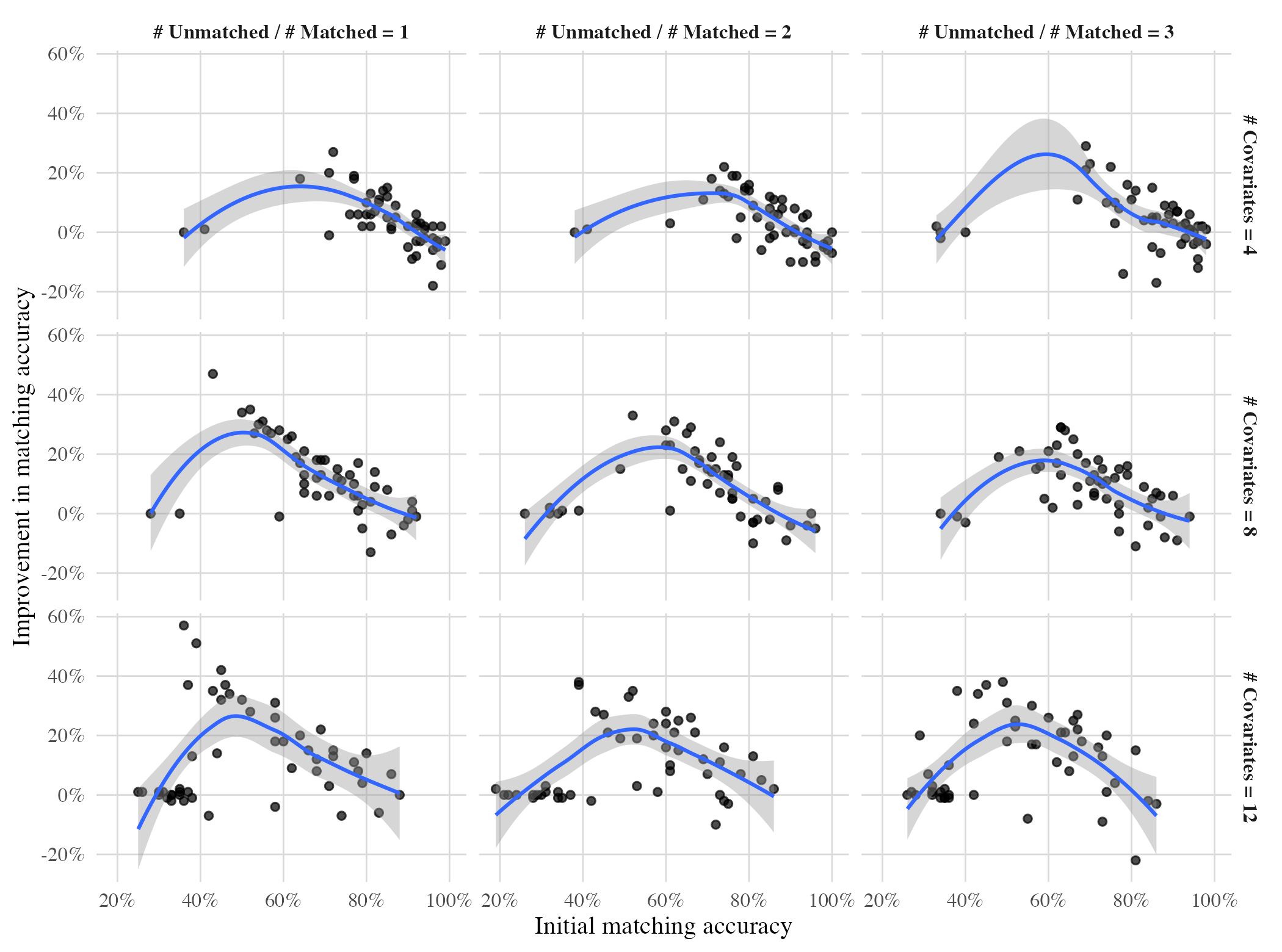}
	\caption{Self-Taught Learning Gains}
	\label{fig:SelfTaught}
\end{figure}


\section{Application of Matching in School Reopening Policy Evaluation}
\label{realdata}

We apply PAMA to one step of a real-world public health study aimed at estimating the association between in-person and virtual education and community COVID-19 case incidence following school reopenings during the first year of the COVID-19 pandemic. The study sample consists of 229 U.S. counties selected according to pre-specified inclusion criteria. From this sample, domain experts manually matched 51 pairs of counties based on their substantive knowledge of local policies and epidemiological contexts. Our goal is to evaluate how well PAMA can replicate the experts' pairing decisions using only the available baseline covariates. The covariates considered for matching include:
\begin{itemize}
	\item Geographical information (state, Bureau of Economic Analysis (BEA) region),
	\item The resumption of school district-level fall sports activity,
	\item Mask enforcement strength,
	\item Pre-intervention COVID-19 incidence rates,
	\item Additional socio-demographic and policy-related features,
\end{itemize}
for a total of 13 baseline variables.

Since PAMA requires numerical inputs, we preprocess categorical geographical indicators by extracting the first two principal coordinates via multidimensional scaling (MDS) \citep{MDS}, ensuring compatibility with the matching algorithm.

\subsection{Performance Evaluation}

We perform 5-fold cross-validation to compare PAMA with Euclidean distance matching, RCA, and DCA. In each fold, a subset of expert-paired observations is held out for evaluation, while the remaining pairs are used to train the matching algorithms. Given the limited test-set size, we reran the full training-evaluation pipeline 20 times with independent cross-validation splits to reduce random error. We report the mean matching accuracy, along with the standard deviation across runs in Table~\ref{tab:realdata_accuracy}. Notably, PAMA almost doubles the matching accuracy compared to baseline methods, demonstrating its ability to capture expert-driven pairing logic in a complex real-world setting.

\vskip 0.5em
\begin{table}[h]
	\centering
	\begin{tabular}{ |M{4cm}|M{2cm}|M{2cm}|M{2cm}|M{2cm}| }
		\hline
		& Euclidean Distance & RCA & DCA & PAMA \\
		\hline
		Matching Accuracy & 0.233 (0.045) & 0.257 (0.038) & 0.286 (0.040) & 0.395 (0.054) \\
		\hline
	\end{tabular}
	\vskip 0.5em
	\caption{Cross-validated Matching Accuracy for County Pairing in Studies of School Reopenings and COVID-19 Transmission}
	\label{tab:realdata_accuracy}
\end{table}

\subsection{Evaluation of Self-Taught Learning}

Beyond the 51 initially matched county pairs, 127 additional unmatched counties are available, 37 of which implemented virtual education. For the bipartite matching task, we therefore investigate whether the self-taught learning extension can improve performance by leveraging these unpaired counties.

	\begin{figure}[t]
		\centering
		\includegraphics[width=.8\textwidth,height=.35\textheight]{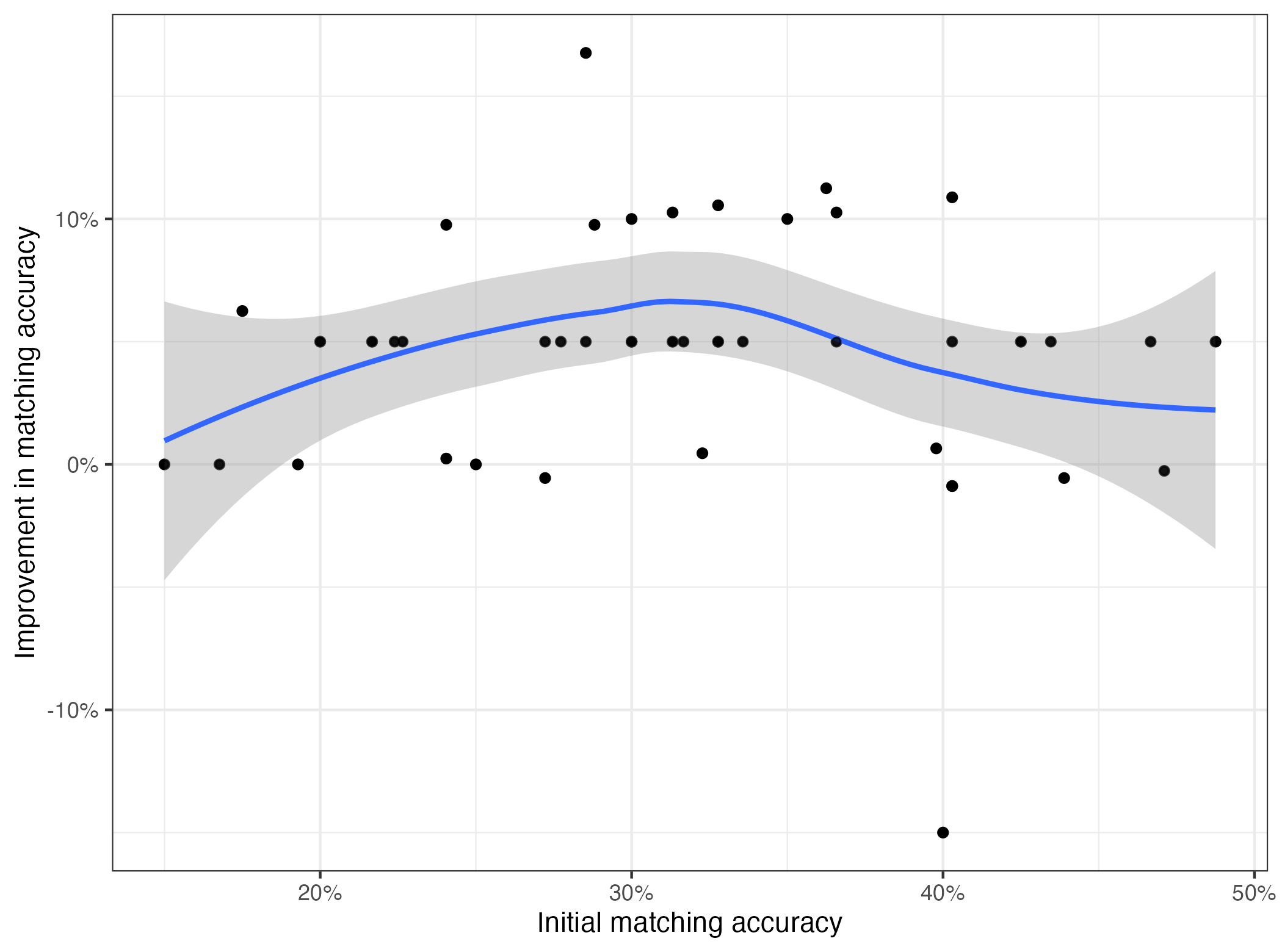}
		\caption{Self-taught learning on the county-matching dataset. Each point corresponds to one bootstrap iteration; the vertical axis shows the improvement in matching accuracy relative to the model fitted using only paired observations.}
		\label{fig:RealSemi}
	\end{figure}
	
For this simulation-based diagnostics pipeline, we draw a bootstrap sample with replacement from the 51 labeled pairs to form a training set, while the out-of-bag labeled pairs serve as the test set. The set of unlabeled counties used by self-taught learning is held fixed across each bootstrap replication. Figure \ref{fig:RealSemi} plots improvement versus baseline accuracy and shows a modest average gain over 50 replications. The result fits our impression from the simulation in Figure \ref{fig:SelfTaught}, indicating that self-taught learning provides limited additional benefit when baseline initial accuracy is already high or raletive low, but is helpful for most replications. Applying self-taught learning with the protocol above yields a mean (SD) initial matching accuracy of 0.399 (0.074) across the 50 replications. Therefore, as a result of this simulation-based diagnostic, these real-world experiments support the utility of the self-taught learning version of PAMA for recovering complex expert matching behavior and suggest that it is likely to provide additional benefits.
	
	

\section{Discussion}
\label{discussion}

This paper proposes the Priority-Aware one-to-one Matching Algorithm (PAMA), a semi-supervised framework that leverages limited expert-labeled pairs to learn covariate importance for matching in observational studies. PAMA addresses key challenges in policy evaluation---including high-dimensional covariates, difficulty in model specification, and data imbalance---by prioritizing covariates according to an approxiamted matching rule. Simulation studies and a real-world application on the impact of in-person schooling on COVID-19 transmission show that PAMA achieves higher matching accuracy and better recovers expert-driven matching logic than standard approaches, demonstrating robustness even when model assumptions are moderately violated. These findings highlight PAMA’s practical utility when expert manual labeling is valuable but resource-intensive, and the self-taught extension further enhances this utility by leveraging unpaired units, though simulation-based diagnostics may be needed for practical guidance. Altogether, PAMA offers a scalable and interpretable solution for matching in complex, policy-relevant datasets, and, to our knowledge, is the first framework to integrate expert knowledge through supervised metric learning in a semi-supervised context.

Despite its strong empirical performance, PAMA has certain limitations and opportunities for methodological refinement. First, PAMA assumes the matching rule can be approximated by a weighted Euclidean distance in the original feature space. While our simulations demonstrate robustness to this assumption, the framework could be extended by lifting the covariates into a higher-dimensional space, e.g., through inclusion of indicator functions, interaction terms, or employing a more flexible, kernelized version of PAMA, to better approximate thresholding or nonlinear expert matching logics. Second, the condition $p < \ell$ is required only for theoretical identifiability and interpretability of the weight vector $\beta$. If practitioners are interested solely in prediction rather than inference and interpretation, PAMA can be applied though this scenario was not evaluated in our simulations due to extensive computational burden. When variable importance is desired for interpretation, dimension reduction, regularization, or non-negativity constraints on $\beta$ may be helpful, especially in domains where negative weights are difficult to justify. Third, while matching accuracy is the primary evaluation metric, we did not provide a formal theoretical analysis of asymptotic matching error, as our framework is intentionally model-free. But under mild conditions, our consistency result for $\hat{\beta}$ implies that matching accuracy approaches one as the number of expert-labeled pairs increases. In practice, we believe simulation-based evaluation offers greater flexibility and practical guidance, particularly in settings where the expert matching process is only partially observed or nonlinear.

\begin{appendix}
\section{Collection of Technical Proofs}
In the appendix, we collect the proofs of lemmas, propositions and theorems presented in the paper, together with detailed explanation of some contents that might be tedious for the earlier sections.
        \subsection{Lemma and Propositions}
        \begin{proof}[Proof of Proposition \ref{prop1}]
        	Notice that both $L_w$ and $L_b$ are positive semidefinite and symmetric, so for the positive definite and symmetric matrix $\Sigma_w= \dot{X}^T L_w \dot{X} /\dot{\ell} +\lambda I $, we may decompose it to $\Sigma_w=\Gamma_{\sigma w}^T \Lambda_{\sigma w} \Gamma_{\sigma w}$ where $\Gamma_{\sigma w}$ is orthonormal and $\Lambda_{\sigma w}=\mathrm{diag}(\lambda_{w1},\cdots,\lambda_{wp})$ is a diagonal matrix with $\lambda_{wi} \geq 0$, $i=1,\cdots, p$. Define $\Lambda_{\sigma w}^{1/2} = \mathrm{diag}( \lambda_{w1}^{1/2},\cdots, \lambda_{wp}^{1/2} )$, $\Sigma_w^{1/2} = \Gamma_{\sigma w}^T \Lambda_{\sigma w}^{1/2} \Gamma_{\sigma w}$, and $\eta_w = \Sigma_{w}^{1/2}\beta / \Vert \Sigma_{w}^{1/2}\beta  \Vert_2  $, we have
        	\begin{equation*}
        		g(\beta) = \frac{ \beta^T \dot{X}^T L_b \dot{X} \beta / \dot{\ell}^2   }{\beta^T  \big(  \dot{X}^T L_w \dot{X} /\dot{\ell} +\lambda I   \big) \beta } = \f{1 }{ \dot{\ell}^2 } \eta^T_w \big( \Sigma_{w}^{-1/2} \big)^T \Sigma_b \Sigma_{w}^{-1/2} \eta_w.
        	\end{equation*}
        	Since the matrix $ \big( \Sigma_{w}^{-1/2} \big)^T\Sigma_b\Sigma_{w}^{-1/2}$ is again positive semidefinite and symmetric, we may decompose it to $\Gamma^T \Lambda \Gamma$ and define $\eta=\Gamma \eta_w$, where $\Gamma$ is an orthonormal matrix and $\Lambda=\mathrm{diag}(\lambda_{1},\cdots,\lambda_{p})$ is a diagonal matrix with $\lambda_{1} \geq \lambda_2 \geq \cdots \geq \lambda_{p} \geq 0$. Hence
        	\begin{equation}
        		g(\beta) = \f{1}{\dot{\ell}^2} \eta_w^T\Gamma^T \Lambda \Gamma \eta_w =  \f{1}{\dot{\ell}^2} \eta^T  \Lambda \eta \leq  \f{\lambda_1}{\dot{\ell}^2}  \Vert \eta \Vert_2^2 =   \f{\lambda_1}{\dot{\ell}^2} . \label{objectiveF}
        	\end{equation}
        	The equality sign of equation (\ref{objectiveF}) holds if and only if $\eta=(1,0,\cdots,0)^T$, which means $\eta_w$ is the eigenvector of $ \big( \Sigma_{w}^{-1/2} \big)^T\Sigma_b \Sigma_{w}^{-1/2}$ corresponding to its largest eigenvalue, and leads to the fact that $\beta$ is the eigenvector of $ \Sigma_{w}^{-1} \Sigma_b $ correspondingly to its largest eigenvalue.
        \end{proof}

        \begin{lemma}\label{CovConv} Under assumption \ref{assmp}, as $p, \dot{\ell}, \ddot{\ell} \to \infty$, the leading eigengap of $$\hat{\Sigma}_{\dot{\ell}}\triangleq \frac{1}{2\dot{\ell}} \sum_{k=1}^{\dot{\ell}} \big( \dot{x}_{c,k} - \dot{x}_{t,k} \big)  \big( \dot{x}_{c,k} - \dot{x}_{t,k} \big)^T $$ is bounded from below. Moreover,
        	\begin{equation*}
        		\Vert \hat{\Sigma}_{\dot{\ell}} - \Sigma \Vert_2 = O_p\Big( \sqrt{\frac{p}{\dot{\ell}}} \Big) , \quad {\rm and}\quad \Big\Vert \frac{\dot{X}^T \dot{X}}{2 \dot{\ell}} - \Sigma\Big\Vert_2 = O_p\Big(\sqrt {\frac{p}{\dot{\ell}}} \Big).
        	\end{equation*}
        	The same results hold when replacing $\hat{\Sigma}_{\dot{\ell}}$, $\dot{\ell}$ and $\dot{X}^T \dot{X}$ with $\hat{\Sigma}_{\ddot{\ell}}$, $\ddot{\ell}$ and $\ddot{X}^T \ddot{X}$ respectively.
        \end{lemma}
        \begin{proof}
        	Under assumption \ref{assmp}, the second statement follows directly from Theorem 6.5 of \citet{wainwright2019high}. We now prove the first statement.
        	\begin{align*}
        		\hat{\Sigma}_{\dot{\ell}} 
        		=& \frac{1}{2\dot{\ell}} \sum_{k=1}^{\dot{\ell}} \Big[ \big( \dot{x}_{c,k} - \dot{x}_{t,k} \big)  \big( \dot{x}_{c,k} - \dot{x}_{t,k} \big)^T - 2 \Sigma \Big]+  \Sigma := K +  \Sigma
        	\end{align*}
        	By Weyl's Inequality and Theorem 6.5 of \citet{wainwright2019high}, we conclude that
        	\begin{equation}
        		\lambda_1(\hat{ \Sigma }_{\dot{\ell}}) -  \lambda_2(\hat{ \Sigma }_{\dot{\ell}}) \geq \big( \lambda_1 - \lambda_2 \big)  - 2||K||_2  = \big( \lambda_1 - \lambda_2 \big) - O_p(\sqrt{p / \dot{\ell}}) \label{weyl}
        	\end{equation}
        	where $\lambda_1(\cdot)$ and $\lambda_2(\cdot)$ denotes the largest and second largest eigenvalues of the corresponding $p\times p$ matrix, respectively, and since $p=o(\dot{\ell})$, (\ref{weyl}) implies that the smallest eigenvalue of $\hat{ \Sigma }_{\dot{\ell}}$ is bounded away from zero.
        \end{proof}
        
        \begin{lemma} \label{lemmaDis}
        	Suppose $P\triangleq ( P_0, P_1 )$ and $Q\triangleq ( Q_{0}, Q_1 )$ are square orthonormal matrices. where $P_0, Q_0 \in \mathcal{M}_{p\times 1}$ and $P_1, Q_1 \in \mathcal{M}_{p\times(p-1)}$. Then for $\mathrm{dist}(P_0, Q_0)$ defined as $\mathrm{dist}(P_0, Q_0) \triangleq  \Vert P_0 P_0^T - Q_0 Q_0^T \Vert_2 $, we have
        	\begin{equation*}
        		\mathrm{dist}(P_0, Q_0) =\Vert P_1^T Q_0 \Vert_2 = \Vert Q_1^T P_0 \Vert_2
        	\end{equation*}
        \end{lemma}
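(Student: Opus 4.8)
The plan is to show that all three quantities equal $\sqrt{1-(P_0^T Q_0)^2}$, i.e.\ the sine of the angle between the lines spanned by $P_0$ and $Q_0$. First I would record the two consequences of orthonormality that do all the work: writing $P=[P_0\ P_1]$ and $Q=[Q_0\ Q_1]$ in block-column form, the identity $PP^T = I_p$ gives $P_0 P_0^T + P_1 P_1^T = I_p$, and likewise $Q_0 Q_0^T + Q_1 Q_1^T = I_p$.

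The two ``one-sided'' identities are then immediate. Since $P_1^T Q_0$ is a column vector, its spectral norm equals its Euclidean norm, and
\begin{equation*}
\|P_1^T Q_0\|_2^2 = Q_0^T P_1 P_1^T Q_0 = Q_0^T (I_p - P_0 P_0^T) Q_0 = 1 - (P_0^T Q_0)^2 ,
\end{equation*}
using $Q_0^T Q_0 = 1$. Interchanging the roles of $P$ and $Q$ gives $\|Q_1^T P_0\|_2^2 = 1 - (P_0^T Q_0)^2$ in exactly the same way, so these two expressions agree.

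For the first expression I would analyze the symmetric matrix $M \triangleq P_0 P_0^T - Q_0 Q_0^T$ directly. Its column space is contained in $\operatorname{span}\{P_0, Q_0\}$, so $M$ has rank at most two, hence at most two nonzero eigenvalues; moreover $\operatorname{tr}(M) = 1 - 1 = 0$, which (for a symmetric matrix) forces the rank to be $0$ or $2$ and the two possibly-nonzero eigenvalues to be $\pm\mu$ for some $\mu\ge 0$. Expanding $M^2$ and using $P_0^T P_0 = Q_0^T Q_0 = 1$ gives $\operatorname{tr}(M^2) = 2\big(1 - (P_0^T Q_0)^2\big)$, while $\operatorname{tr}(M^2) = 2\mu^2$; therefore $\mu = \sqrt{1-(P_0^T Q_0)^2}$ and $\|M\|_2 = \mu$, matching the other two quantities. (An equivalent route: conjugating $M$ by the orthogonal matrix $P$ turns it into $e_1 e_1^T - w w^T$ with $w = P^T Q_0$ a unit vector whose first coordinate is $P_0^T Q_0$ and whose remaining coordinates form $P_1^T Q_0$; this simultaneously reduces the spectral-norm computation to a single $2\times 2$ eigenvalue problem and re-derives the one-sided identities.)

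I do not expect a genuine obstacle here --- this is the standard description of the gap between one-dimensional subspaces --- and the only point needing a line of care is the degenerate case $P_0 = \pm Q_0$, where $(P_0^T Q_0)^2 = 1$, $M = 0$, and $P_1^T Q_0 = Q_1^T P_0 = 0$, so the asserted equalities hold trivially; in the remaining case $M$ has rank exactly two and the trace identity pins down both eigenvalues.
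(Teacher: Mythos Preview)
Your proposal is correct. The overall strategy---show that each of the three quantities equals $\sqrt{1-(P_0^T Q_0)^2}$---is exactly the paper's strategy, and your computation of $\|P_1^T Q_0\|_2$ via $P_1 P_1^T = I_p - P_0 P_0^T$ is identical to the paper's.

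The one place the two arguments diverge is the evaluation of $\|P_0 P_0^T - Q_0 Q_0^T\|_2$. The paper sets $\cos\theta = P_0^T Q_0$, uses the singular value decomposition of $Q_0^T P_1$ to write $Q_0^T P_1 = (\sin\theta,0,\dots,0)\tilde{U}^T$, and then explicitly conjugates $P_0 P_0^T - Q_0 Q_0^T$ by the orthogonal matrix $(P_0,P_1)\operatorname{diag}(1,\tilde{U})$ to reduce it to a $2\times 2$ block whose spectral norm is read off as $\sin\theta$. Your main route---observe that $M$ is symmetric of rank at most two with $\operatorname{tr}(M)=0$, so its nonzero eigenvalues are $\pm\mu$, and then compute $\operatorname{tr}(M^2)=2(1-(P_0^T Q_0)^2)=2\mu^2$---is a genuinely different and somewhat slicker argument: it avoids the SVD step and the block-matrix bookkeeping entirely. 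The alternative you sketch in parentheses (conjugate by $P$ and reduce to $e_1 e_1^T - w w^T$) is essentially the paper's approach. Both routes are standard; yours also explicitly dispatches the degenerate case $P_0=\pm Q_0$, which the paper leaves implicit.
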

        \begin{proof}[Proof of Lemma \ref{lemmaDis}]
        	Define $P_0^T Q_0 = \cos \theta$ with some $\theta \in [0,\pi]$ since $P_0$ and $Q_0$ are unit vectors. Thus
        	\begin{align}
        		\Vert P_1^T Q_0 \Vert_2 &= \left(Q_0^T P_1 P_1^T Q_0  \right)^{1/2} = \left(  Q_0^T \big( I - P_0 P_0^T \big) Q_0  \right)^{1/2} \label{RHS} \\
        		&= (1-\cos^2\theta)^{1/2} =\sin\theta . \nn 
        	\end{align}
        	The singular value decomposition further implies that, there exist an orthonormal matrix $\tilde{U}$ such that, $Q_0^T P_1 = (\sin\theta, 0,\cdots, 0) \cdot \tilde{U}^T$. Therefore,
        	\begin{align*}
        		P_0 P_0^T - Q_0 Q_0^T &= P_0 P_0^T -  ( P_0, P_1 )  \begin{pmatrix}
        			P_0^T \\
        			P_1^T
        		\end{pmatrix} Q_0 Q_0^T  ( P_0, P_1 )  \begin{pmatrix}
        			P_0^T \\
        			P_1^T
        		\end{pmatrix} \\
        		&= P_0 P_0^T -  ( P_0, P_1 )  \begin{pmatrix}
        			\cos\theta \\
        			\tilde{U}\cdot \begin{pmatrix}
        				\sin\theta \\
        				0_{(p-1)\times 1}
        			\end{pmatrix} 
        		\end{pmatrix}  \begin{pmatrix}
        			\cos\theta, (\sin\theta,
        			0_{1\times (p-1)}) \cdot
        			\tilde{U}^T
        		\end{pmatrix}   \begin{pmatrix}
        			P_0^T \\
        			P_1^T
        		\end{pmatrix} \\
        		&= ( P_0, P_1 )  \begin{pmatrix}
        			1 & 0\\
        			0& \tilde{U}
        		\end{pmatrix}  \begin{pmatrix}
        			\begin{pmatrix}
        				\sin^2\theta & -\cos\theta\sin\theta\\
        				-\cos\theta\sin\theta &  -\sin^2\theta
        			\end{pmatrix}  & 0_{2\times (p-2)}\\
        			0_{(p-2)\times 2}& 0
        		\end{pmatrix}  \begin{pmatrix}
        			1 & 0\\
        			0& \tilde{U}^T
        		\end{pmatrix}  \begin{pmatrix}
        			P_0^T \\
        			P_1^T
        		\end{pmatrix}.
        	\end{align*}
        	Hence
        	\begin{equation}
        		\Vert P_0 P_0^T - Q_0 Q_0^T \Vert_2 =\Big\Vert  \begin{pmatrix}
        			\sin^2\theta & -\cos\theta\sin\theta\\
        			-\cos\theta\sin\theta &  -\sin^2\theta
        		\end{pmatrix}  \Big \Vert_2 =\sin \theta \label{LHS}
        	\end{equation}
        	Combine (\ref{LHS}) with equation (\ref{RHS}) leads to the result and we proved the lemma.
        \end{proof}
        
        \subsection{Theorems}
        
        \begin{theorem}[Restatement of the Davis-Kahan $\sin(\theta)$ Theorem]\label{DKTheorem}
        	Consider two symmetric positive definite matrix $G, \hat{G} = G + H \in \mathcal{M}_{p \times p}$, with eigen-decompositions
        	\begin{align*}
        		G & = U\Xi U^T =\begin{pmatrix}
        			U_0 & U_1
        		\end{pmatrix} \begin{pmatrix}
        			\xi_1 & 0 \\ 0 &\Xi_2
        		\end{pmatrix}\begin{pmatrix}
        			U_0^T \\ U_1^T
        		\end{pmatrix} \\
        		\hat{G} & = \hat{U} \hat{\Xi} \hat{U}^T  = \begin{pmatrix}
        			\hat{U}_0 & \hat{U}_1
        		\end{pmatrix} \begin{pmatrix}
        			\hat{\xi}_1 & 0 \\ 0 & \hat{\Xi}_2
        		\end{pmatrix}\begin{pmatrix}
        			\hat{U}_0^T \\ \hat{U}_1^T
        		\end{pmatrix}
        	\end{align*}
        	where $\xi_1 \geq \cdots \geq \xi_p$ are the ordered eigenvalues of $G$ and $\Xi_2 = \mathrm{diag}(\xi_2,\cdots,\xi_p)$; $\hat{\xi}_1$ and $  \hat{\Xi}_2$ are similarly defined. Then
        	\begin{equation*}
        		\mathrm{dist}(U_0,\hat{U}_0) = \big\Vert  U_0U_0^T - \hat{U}_0\hat{U}_0^T \big\Vert_2 \leq  \frac{ \Vert H  \Vert_2 }{\xi_1-\xi_2  - \Vert H \Vert_2}.
        	\end{equation*}
        	if the denominator is bounded away from zero.
        \end{theorem}
        \begin{proof}[Proof of theorem \ref{DKTheorem}]
        	Notice that $\Vert \hat{\Xi}_2 - \Xi_2 \Vert_2 \leq \Vert H \Vert_2$ by Weyl's theorem, thus,
        	\begin{align*}
        		\big\Vert \hat{U}_1^T H U_0 \big\Vert_2 =& \big\Vert \hat{U}_1^T (\hat{U} \hat{\Xi} \hat{U}^T - U \Xi U^T) U_0 \big\Vert_2= \big\Vert \hat{\Xi}_2 \hat{U}_1^T U_0 -  \hat{U}_1^T U_0 \xi_1 \big\Vert_2 \\
        		\geq &\big\Vert \hat{U}_1^T U_0 \xi_1\big\Vert_2  - \big\Vert \Xi_2 \hat{U}_1^T U_0\big\Vert_2 - \big\Vert  (\hat{\Xi}_2 - \Xi_2 ) \hat{U}_1^T U_0 \big\Vert_2 \\
        		\geq &\big\Vert \hat{U}_1^T U_0 \big\Vert_2 \xi_1 - \big\Vert \hat{U}_1^T U_0 \big\Vert_2 \big( \big\Vert \Xi_2 \big\Vert_2 +   \big\Vert \hat{\Xi}_2 - \Xi_2 \big\Vert_2 \big) \\
        		\geq & \big\Vert \hat{U}_1^T U_0 \big\Vert_2 \cdot \big( \xi_1 - \xi_2 - \Vert H \Vert_2  \big).
        	\end{align*}
        	Therefore, by Lemma \ref{lemmaDis}, we have
        	\begin{equation*}
        		\mathrm{dist}(U_0,\hat{U}_0) =\Vert \hat{U}_1^T U_0 \Vert_2 \leq \frac{ \Vert \hat{U}_1^T H U_0\Vert_2}{ \xi_1 - \xi_2  - \Vert H \Vert_2} \leq \frac{\Vert \hat{U}_1 \Vert_2 \cdot \Vert  H \Vert_2 \cdot \Vert U_0 \Vert_2 }{ \xi_1 - \xi_2  - \Vert H \Vert_2} \leq  \frac{ \Vert H  \Vert_2 }{ \xi_1 - \xi_2   - \Vert H \Vert_2}.
        	\end{equation*}
        \end{proof}
   
        \begin{proof}[Proof of Theorem \ref{main_theorem}]
        	Define $\mathcal{S}(Z)=\{ z_{c,i}, z_{t,i}\in Z: (z_{c,i},z_{t,i}) \}_{i\geq 1}$ be the set of matching pairs of $Z$, and correspondingly, define $G(Z)= \Sigma_{\omega Z}^{-1}\Sigma_{bZ}$, where
        	\begin{align*}
        		\Sigma_{\omega Z}&= \lambda  I + \frac{1}{\dot{\ell}} \sum_{(z_{c,i},z_{t,i})\in\mathcal{S}(Z)} (z_{c,i}-z_{t,i})(z_{c,i}-z_{t,i})^T ,\\
        		\Sigma_{bZ} &=  \sum_{z_c\in Z_c, z_t\in Z_t, (z_c,z_t)\notin\mathcal{S}(Z)} (z_c-z_t)(z_c-z_t)^T.
        	\end{align*}
        	Then according to Proposition \ref{prop1}, $\hat{\beta}^{(0)}$ is the eigenvector of $G(\dot{X})$ corresponding to its largest eigenvalue and $\hat{\beta}^{*}$ is the eigenvector of $G(X)$ corresponding to its largest eigenvalue. Therefore, according to Davis-Kahan $\sin( \theta)$ theorem, we only need to investigate the order of $\Vert H\Vert_2= \Vert G(\dot{X}) - G(X) \Vert_2$ and the leading eigengap of $G(\dot{X})$.
        	\begin{enumerate}[label=(\roman*)]
        		\item {For the leading eigengap of $G(\dot{X})$}: 
        		\vskip 0.5em
        		Notice that
        		\begin{align}
        			G(\dot{X}) &= \frac{1}{\dot{\ell}}\Big(\dot{X}^T L_w \dot{X} + \lambda \dot{\ell} I \Big)^{-1} \dot{X}^T \Big(-L_w +  \dot{\ell} I +E \Big) \dot{X}  \label{1}\\
        			&= -\frac{1}{\dot{\ell}}\Big(\dot{X}^T L_w \dot{X} + \lambda \dot{\ell} I \Big)^{-1} \dot{X}^T L_w \dot{X} + \Big( \dot{X}^T L_w \dot{X} + \lambda \dot{\ell} I \Big)^{-1} \dot{X}^T \dot{X} \nn  \\
        			& \qquad +  \Big( \dot{X}^T L_w \dot{X} + \lambda \dot{\ell} I \Big)^{-1} \frac{1}{\dot{\ell}}\dot{X}^T E \dot{X} \triangleq G_1(\dot{X}) + G_2(\dot{X}) + G_3(\dot{X}), \nn
        		\end{align}
        		where 
        		\begin{equation*}
        			E= \begin{pmatrix}
        				&0   & \mathbbm{1}\cdot \mathbbm{1}^T \\
        				&\mathbbm{1} \cdot \mathbbm{1}^T  & 0  
        			\end{pmatrix}, \quad {\rm with}\;\; \mathbbm{1} = (1,\cdots,1)^T \in \mathcal{M}_{\dot{\ell}\times 1}.
        		\end{equation*}
        		By central limit theorem, it's trivial that
        		\begin{equation*}
        			\Big \Vert \frac{1}{\sqrt{\dot{\ell}}}\mathbbm{1}^T \dot{X}_c \Big \Vert_2 =O_p(\sqrt{p}), \;\; {\rm and} \;\; \Big \Vert \frac{1}{\sqrt{\dot{\ell}}}\mathbbm{1}^T \dot{X}_t \Big \Vert_2 =O_p(\sqrt{p}).
        		\end{equation*}
        		Therefore,
        		\begin{align}
        			\Vert G_3(\dot{X}) \Vert_2  &= \Big\Vert  \Big( \dot{X}^T L_w \dot{X} + \lambda \dot{\ell} I \Big)^{-1} \frac{1}{\dot{\ell}}\dot{X}^T E \dot{X} \Big \Vert_2  \label{2} \\
        			& = \Big\Vert  \Big( \dot{X}^T L_w \dot{X} + \lambda \dot{\ell} I \Big)^{-1} \frac{1}{\dot{\ell}}\Big(  \dot{X}_c^T \mathbbm{1} \mathbbm{1}^T \dot{X}_t + \dot{X}_t^T \mathbbm{1} \mathbbm{1}^T \dot{X}_c \Big)\Big \Vert_2  \nn \\
        			& \leq \frac{2}{\lambda \dot{\ell}} \Big \Vert \frac{1}{\sqrt{\dot{\ell}}}\mathbbm{1}^T \dot{X}_c \Big \Vert_2 \cdot \Big \Vert \frac{1}{\sqrt{\dot{\ell}}}\mathbbm{1}^T \dot{X}_t \Big \Vert_2  = O_p \Big( \frac{p}{\dot{\ell}}  \Big). \nn
        		\end{align}
        		Because 
        		\begin{gather*}
        			\big\Vert 	\hat{\Sigma}_{\dot{\ell}} -\Sigma \big\Vert_2  = \Big\Vert \frac{1}{2\dot{\ell}} \dot{X}^T L_w \dot{X} -\Sigma \Big\Vert_2 = \Big\Vert \frac{1}{2\dot{\ell}} \sum_{i=1}^{\dot{\ell}} (\dot{x}_{c,i}-\dot{x}_{t,i})(\dot{x}_{c,i}-\dot{x}_{t,i})^T -  \Sigma \Big\Vert_2= O_p\Big( \sqrt{\frac{p}{\dot{\ell}}} \Big), \\
        			{\rm and}\quad  \Big\Vert \frac{1}{2\dot{\ell}} \dot{X}^T \dot{X} - \Sigma \Big\Vert_2 = O_p\Big( \sqrt{\frac{p}{\dot{\ell}}} \Big) ,
        		\end{gather*}
        		according to Lemma \ref{CovConv}. Thus, by Weyl's theorem, we have
        		\begin{equation}
        			\Vert G_1(\dot{X}) \Vert_2 \leq \frac{1}{\dot{\ell}}  \frac{\lambda_1+O_p\Big( \sqrt{\frac{p}{\dot{\ell}}} \Big) }{\lambda_p + \frac{\lambda}{2} - O_p\Big( \sqrt{\frac{p}{\dot{\ell}}} \Big)}= O_p \Big( \frac{1+\lambda_1}{\dot{\ell}} \Big). \label{3}
        		\end{equation}
        		Meanwhile, we further decompose $G_2(\dot{X}) $,
        		\begin{align}
        			G_2(\dot{X}) &= \Big( \frac{1}{2\dot{\ell}}\dot{X}^T L_w \dot{X} + \f{\lambda}{2} I \Big)^{-1} \Big( \frac{1}{2\dot{\ell}} \dot{X}^T \dot{X} \Big) \label{4}\\
        			&   = \Big[   \Big( \frac{1}{2\dot{\ell}}\dot{X}^T L_w \dot{X} + \f{\lambda}{2} I \Big)^{-1} -    \Big(\Sigma + \f{\lambda}{2} I \Big)^{-1}   \Big]\Big( \frac{1}{2\dot{\ell}} \dot{X}^T \dot{X} \Big) \nn \\
        			& \quad + \Big(\Sigma + \f{\lambda}{2} I \Big)^{-1} \Big( \frac{1}{2\dot{\ell}} \dot{X}^T \dot{X} - \Sigma \Big)  + \Big(\Sigma + \f{\lambda}{2} I \Big)^{-1} \Sigma \nn \\
        			&  \triangleq G_4(\dot{X})+ G_5(\dot{X}) + G_6. \nn
        		\end{align}
        		Since
        		\begin{align*}
        			G_4(\dot{X})& =  \Big( \frac{1}{2\dot{\ell}}\dot{X}^T L_w \dot{X} + \f{\lambda}{2} I \Big)^{-1}  \left[ \Big( \Sigma + \f{\lambda}{2} I \Big)  -  \Big( \frac{1}{2\dot{\ell}}\dot{X}^T L_w \dot{X} + \f{\lambda}{2} I  \Big)  \right]  \Big(\Sigma + \f{\lambda}{2} I \Big)^{-1}  \\
        			& \qquad \cdot \Big[ \Sigma + \Big(   \frac{1}{2\dot{\ell}} \dot{X}^T \dot{X} - \Sigma   \Big) \Big] \\
        			& = \Big(  	\hat{\Sigma}_{\dot{\ell}}  + \f{\lambda}{2} I \Big)^{-1}  \Big( \Sigma - 	\hat{\Sigma}_{\dot{\ell}}   \Big)  \Big(\Sigma + \f{\lambda}{2} I \Big)^{-1} \Big[ \Sigma + \Big(   \frac{1}{2\dot{\ell}} \dot{X}^T \dot{X} - \Sigma   \Big) \Big],
        		\end{align*}
        		so
        		\begin{align}
        			\Vert G_4(\dot{X}) \Vert_2& \leq \Big \Vert  \Big(  	\hat{\Sigma}_{\dot{\ell}}  + \f{\lambda}{2} I \Big)^{-1}  \Big( \Sigma - 	\hat{\Sigma}_{\dot{\ell}}   \Big)  \Big(\Sigma + \f{\lambda}{2} I \Big)^{-1} \Big[ \Sigma + \Big(   \frac{1}{2\dot{\ell}} \dot{X}^T \dot{X} - \Sigma   \Big) \Big] \Big \Vert_2 \label{5} \\
        			& \leq \frac{O_p\Big( \sqrt{\frac{p}{\dot{\ell}}} \Big)}{\lambda_{p}+\f{\lambda}{2} - O_p\Big( \sqrt{\frac{p}{\dot{\ell}}} \Big)} \cdot \frac{\Big( \lambda_1 + O_p\Big( \sqrt{\frac{p}{\dot{\ell}}} \Big) \Big)}{ \lambda_{p}+\f{\lambda}{2} }  = O_p\Big( (1+\lambda_1) \sqrt{\frac{p}{\dot{\ell}}} \Big) . \nn
        		\end{align}
        		Similarly,
        		\begin{equation}
        			\Vert G_5(\dot{X}) \Vert_2  \leq \frac{ O_p\Big( \sqrt{\frac{p}{\dot{\ell}}}  \Big)}{ \lambda_{p}+\f{\lambda}{2} }  = O_p\Big( \sqrt{\frac{p}{\dot{\ell}}}  \Big). \label{6}
        		\end{equation}
        		Because $\Sigma$ is symmetric and positive definite, so we have the decomposition $\Sigma=U_\Sigma \Lambda U_{\Sigma}^T$, where $\Lambda=\mathrm{diag}(\lambda_1,\cdots,\lambda_p)$ and $U_{\Sigma}$ is a orthonormal matrix. Hence
        		\begin{equation*}
        			G_6 =  U_{\Sigma} \cdot \mathrm{diag} \Big(  \frac{\lambda_1}{\lambda_1+\f{\lambda}{2}} , \cdots,  \frac{\lambda_p}{\lambda_p+\f{\lambda}{2}}   \Big) \cdot 
        			U_{\Sigma}^T
        		\end{equation*}
        		since $x/(x+\lambda/2)$ is a increasing function of $x$, so the leading eigengap of $G_6$ is
        		\begin{equation}
        			\frac{\lambda_1}{\lambda_1+\f{\lambda}{2}} - \frac{\lambda_2}{\lambda_2+\f{\lambda}{2}} = \frac{\lambda (\lambda_1-\lambda_2)}{2 \big(\lambda_1+\f{\lambda}{2}\big)\big(\lambda_2+\f{\lambda}{2}\big) } = O\Big(  \f{1}{(1+\lambda_1)^2} \Big). \label{7}
        		\end{equation}
        		Combine (\ref{1})$\sim$(\ref{7}), by Weyl's theorem, we conclude that the leading eigengap of $G(\dot{X})$, denoted as $\tilde{\lambda}_1- \tilde{\lambda}_2$, satisfy
        		\begin{equation}
        			\tilde{\lambda}_1- \tilde{\lambda}_2 \geq  O\Big(  \f{1}{(1+\lambda_1)^2} + (1+\lambda_1) \sqrt{\frac{p}{\dot{\ell}}} \Big)= O\Big(  \f{1}{(1+\lambda_1)^2}  \Big). \label{eigengap}
        		\end{equation}
        		\item {For the perturbation $H = G(\dot{X}) - G(X)$}: 
        		\vskip 0.5em
        		With same decomposition of (\ref{1}) and $(\ref{4})$, we have
        		\begin{equation*}
        			H= G(\dot{X}) - G(X) = \sum_{i\in \{ 1,3,4,5 \}} \big( G_i(\dot{X}) - G_i(X) \big)
        		\end{equation*}
        		where $G_6$ cancels out. Because $X$ behave similar to $\dot{X}$ except for having a larger sample size (number of rows), (\ref{2}), (\ref{3}), (\ref{5}) and (\ref{6}) remian true when replacing $\dot{X}$ and $\dot{\ell}$ with $X$ and $\dot{\ell}+\ddot{\ell}$, respectively. Therefore,
        		\begin{align}
        			\Vert H \Vert_2 &\leq \sum_{i\in \{ 1,3,4,5 \}}  \Big( \Vert G_i(\dot{X}) \Vert_2 + \Vert G_i(X) \Vert_2   \Big)\label{perturbation} \\
        			&= O\Big(  (1+\lambda_1) \sqrt{\frac{p}{\dot{\ell}}} \Big) + O\Big(  (1+\lambda_1) \sqrt{\frac{p}{\dot{\ell}+\ddot{\ell}}} \Big) = O\Big(  (1+\lambda_1) \sqrt{\frac{p}{\dot{\ell}}} \Big) . \nn
        		\end{align}
        		Therefore, according to Davis-Kahan $\sin( \theta)$ theorem and combine (\ref{eigengap}), (\ref{perturbation}), we conclude that
        		\begin{equation*}
        			\mathrm{dist}(\hat{\beta}^{(0)}, \hat{\beta}^*) \leq \frac{ \Vert H \Vert_2  }{  \tilde{\lambda}_1- \tilde{\lambda}_2 - \Vert H \Vert_2  } = O\Big(  (1+\lambda_1)^3 \sqrt{\frac{p}{\dot{\ell}}} \Big).
        		\end{equation*}
        	\end{enumerate}
        \end{proof}
\end{appendix}

\bibliographystyle{imsart-nameyear}
\bibliography{references.bib}

\end{document}